%% file: main.tex
\documentclass[10pt,twocolumn,letterpaper]{article}

\usepackage{cvpr}              
\input{preamble}
\definecolor{cvprblue}{rgb}{0.21,0.49,0.74}
\usepackage[pagebackref,breaklinks,colorlinks,allcolors=cvprblue]{hyperref}

\def\paperID{} 
\def\confName{CVPR}
\def\confYear{2026}

\title{ParallelVLM: Lossless Video-LLM Acceleration with Visual Alignment Aware Parallel Speculative Decoding}

\author{Quan Kong$^{*}$, Yuhao Shen$^{*}$, Yicheng Ji, Huan Li, Cong Wang$^{\dag}$ \\
Zhejiang University, Hangzhou, China\\
{\tt\small \{qkv, riven, jiyicheng.cs, lihuan.cs, cwang85\}@zju.edu.cn}
}

\usepackage{float}
\usepackage{graphicx,overpic}
\usepackage{algorithm}            
\usepackage{algpseudocode}        
\usepackage{amsmath}
\usepackage{colortbl}
\usepackage{color,xcolor}
\definecolor{light-gray}{gray}{0.95}    
\usepackage{multirow}
\usepackage{bm}
\usepackage{enumerate}
\usepackage{pifont}
\usepackage{amsthm} 
\usepackage{makecell}
\newtheorem{theorem}{Theorem} 

\theoremstyle{definition}

\usepackage{subcaption}
\begin{document}
\twocolumn[{%
\maketitle
\includegraphics[width=0.76\linewidth]{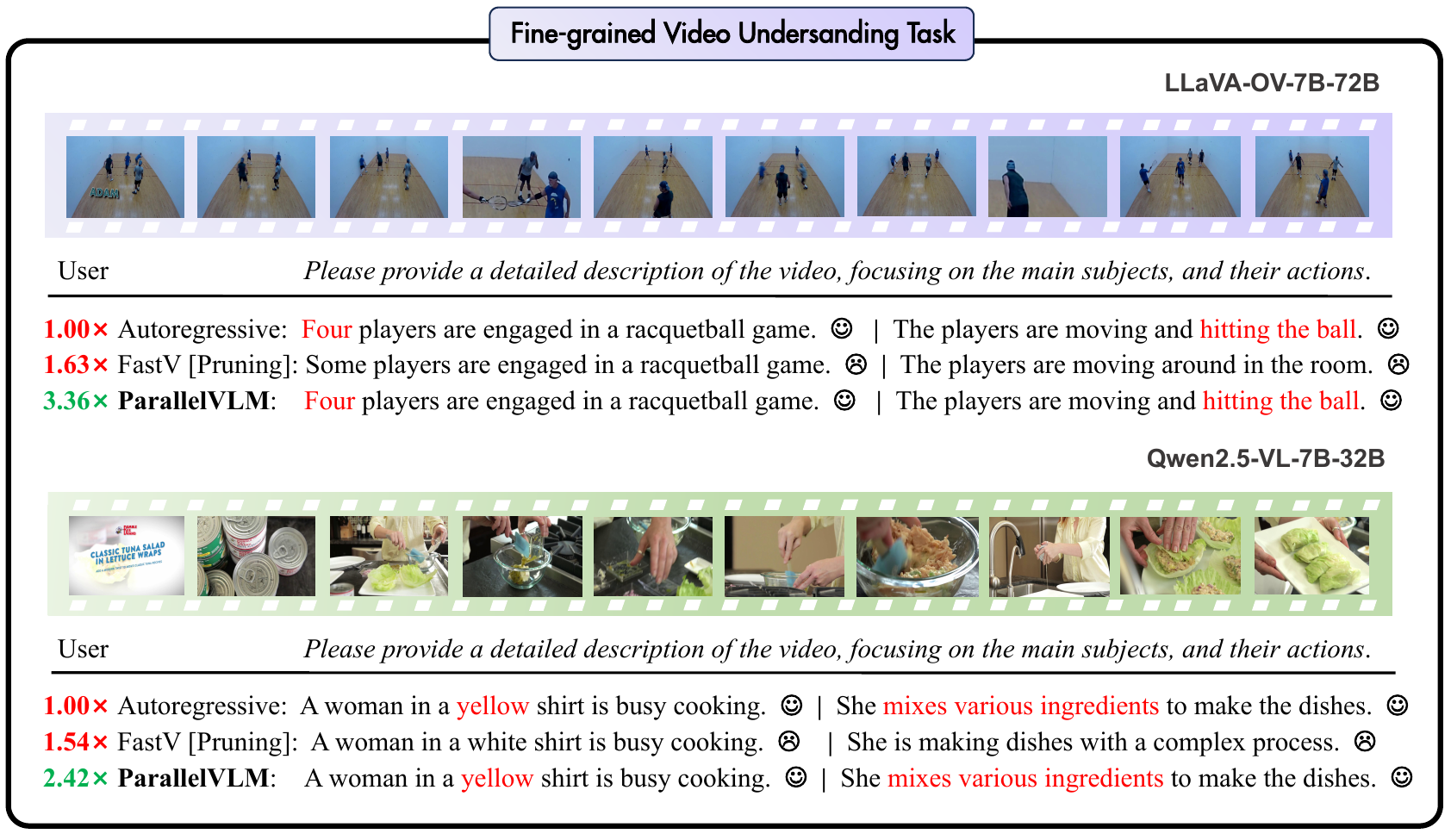}
\includegraphics[width=.23\linewidth]{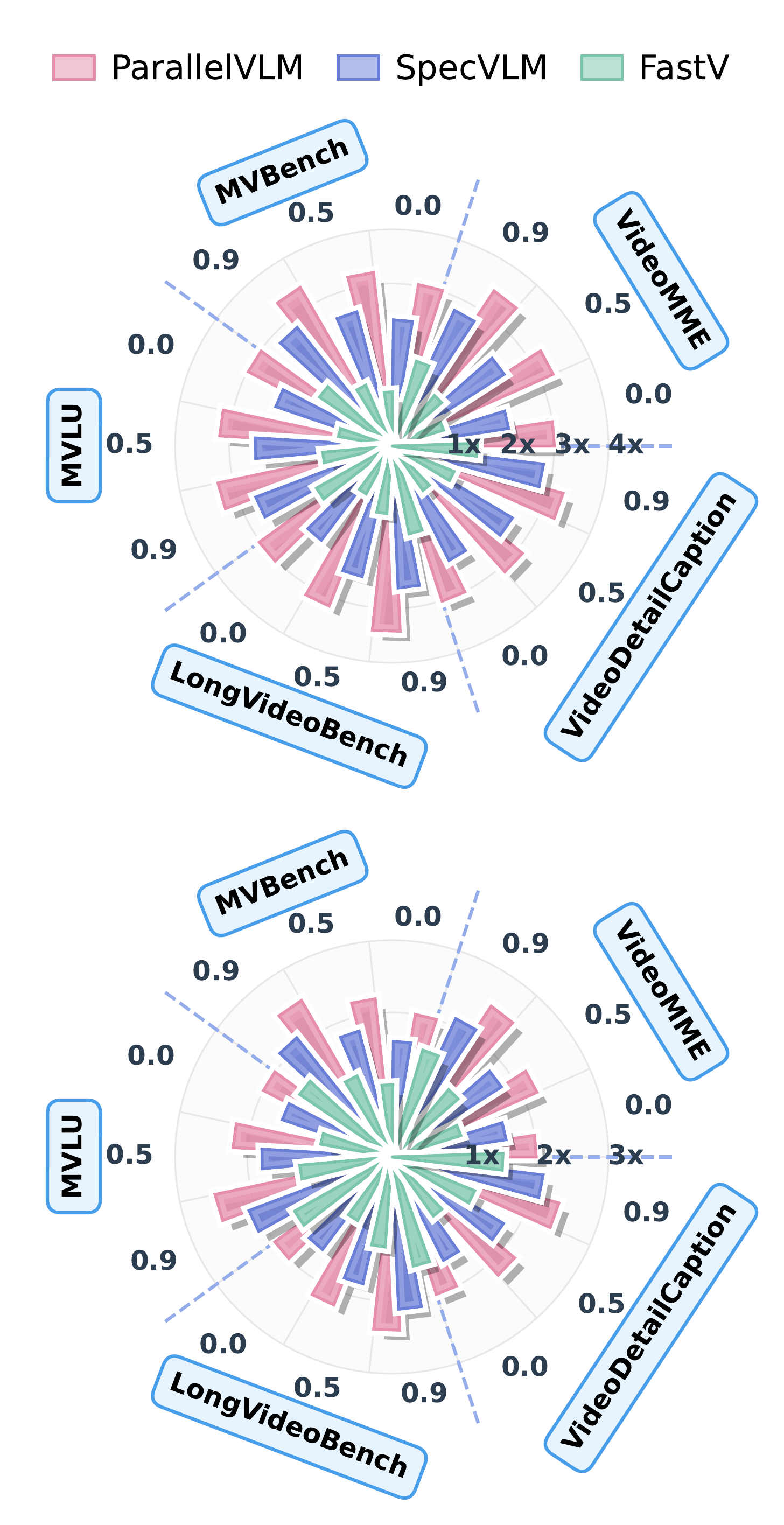}
\vspace{-0.5em}
\captionof{figure}{Comparison of ParallelVLM with vanilla autoregressive decoding and visual token pruning. ParallelVLM achieves up to $3.36\times$ acceleration for LLaVA-OV-72B and $2.42\times$ for Qwen2.5-VL-32B across various video understanding benchmarks. \vspace{1em}}
\label{fig:teaser}
}]

\renewcommand{\thefootnote}{\fnsymbol{footnote}}
\setcounter{footnote}{0}
\footnotetext{$^*$Equal Contribution.}
\footnotetext{$^{\dag}$Corresponding Author.}

\input{sec/0_abstract}    
\input{sec/1_intro}
\input{sec/2_preliminary}
\input{sec/3_motivation}

\input{sec/4_method}

\input{sec/5_experiment}
\input{sec/6_related_work}

\input{sec/7_conclusion}
{
    \small
    \bibliographystyle{ieeenat_fullname}
    \bibliography{main}
}

\input{sec/X_suppl}

\end{document}

%% file: sec/0_abstract.tex
\begin{abstract}
Although current Video-LLMs achieve impressive performance in video understanding tasks, their autoregressive decoding efficiency remains constrained by the massive number of video tokens. Visual token pruning can partially ease this bottleneck, yet existing approaches still suffer from information loss and yield only modest acceleration in decoding. In this paper, we propose ParallelVLM, a training-free draft-then-verify speculative decoding framework that overcomes both mutual waiting and limited speedup-ratio problems between draft and target models in long-video settings. ParallelVLM features two parallelized stages that maximize hardware utilization and incorporate an Unbiased Verifier-Guided Pruning strategy to better align the draft and target models by eliminating the positional bias in attention‑guided pruning. Extensive experiments demonstrate that ParallelVLM effectively expands the draft window by $1.6\sim1.8\times$ with high accepted lengths, and accelerates various video understanding benchmarks by 3.36$\times$ on LLaVA-OneVision-72B and 2.42$\times$ on Qwen2.5-VL-32B compared with vanilla autoregressive decoding. Codes are available at \href{https://github.com/imKQv/ParallelVLM}{https://github.com/imKQv/ParallelVLM}.
\end{abstract}

%% file: sec/1_intro.tex
\begin{figure*}
    \centering
\includegraphics[width=1.0\linewidth]{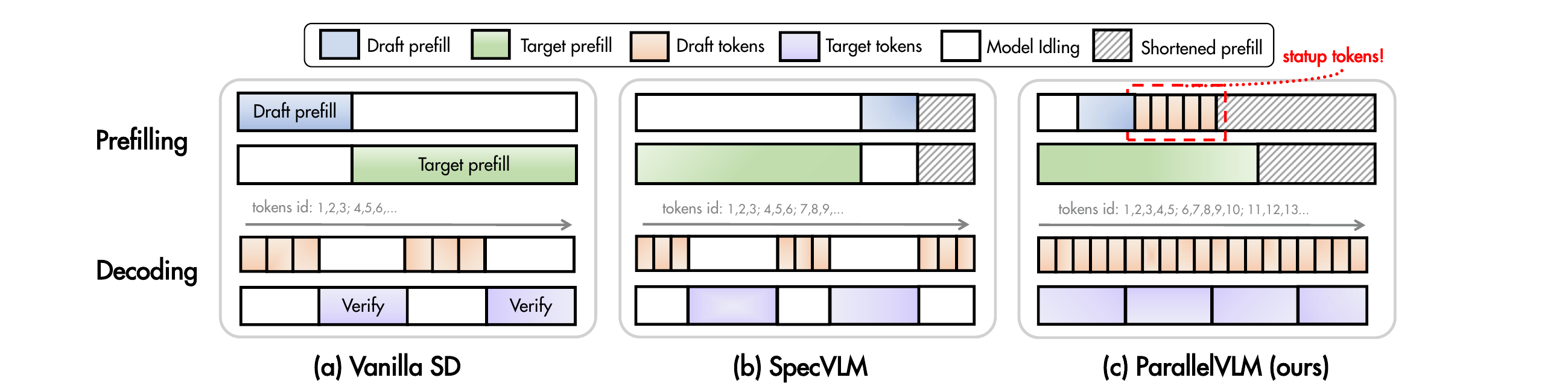}
    \vspace{-0.3in}
    \caption{\small{Comparison of different training-free speculative decoding frameworks for Video-LLMs. (a) Vanilla SD: sequential in both prefilling and decoding stages, achieving speedup only with the draft-then-verify design. (b) SpecVLM~\cite{Ji2025SpecVLMES}: inherits vanilla SD and enables faster draft model execution via attention-guided pruning. (c) ParallelVLM: adopts a parallel pipeline to mitigate idling time in vanilla SD, co-designed with UV-Prune strategy to expand the draft window size meanwhile preserving draft/target alignment.}}
    \label{fig:comparison}
\vspace{-0.2in}
\end{figure*}

\vspace{-0.2in}
\section{Introduction}
\label{sec:intro}

The advent of Large Language Models (LLMs) has ushered a new era in AI with foundational models like GPT, LLaMA, Qwen and Deepseek~\cite{Achiam2023GPT4TR,DeepSeekAI2024DeepSeekV3TR,yang2025qwen3,Touvron2023LLaMAOA}. This success has quickly catalyzed multimodal research~\cite{Liu2023VisualIT,hurst2024gpt,Li2024MiniGeminiMT, wu2026atlas} such as Video-LLMs~\cite{Li2024LLaVAOneVisionEV,Lin2023VideoLLaVALU,Bai2025Qwen25VLTR}, enabling a wide range of new applications like video understanding~\cite{Fang2024MMBenchVideoAL}, content auditing~\cite{Liu2025ProtectingYV}, and autonomous driving~\cite{shao2024lmdrive}.

Despite these advances, the practical deployment of Video‑LLMs is hampered by \emph{inference latency}. State-of-the-art models like LLaVA-OneVision~\cite{Li2024LLaVAOneVisionEV} typically encode videos into long sequences of visual tokens, which can easily scale from thousands to millions. Due to the quadratic computational cost of self-attention~\cite{Vaswani2017AttentionIA}, such an explosion in token length creates a crippling performance bottleneck in both prefilling (KV cache computation) and decoding. To address this challenge, numerous efforts have been devoted to designing more effective visual token pruning methods~\cite{FastVChen2024AnII, Zhang2024SparseVLMVT, Xing2024PyramidDropAY,Tao2024DyCokeD}. However, as shown in Fig.~\ref{fig:teaser}, since autoregressive access to the full model parameters is still required, they only offer limited acceleration. Even worse, \emph{pruning without validation} induces inevitable distributional shifts that cause measurable performance degradation, particularly in fine-grained video understanding tasks.

To achieve lossless acceleration, \emph{Speculative Decoding} (SD) offers a promising alternative~\cite{Chen2023AcceleratingLL,Leviathan2022FastIF,Xia2022SpeculativeDE} and SpecVLM has pioneered the application of SD to Video-LLMs~\cite{Ji2025SpecVLMES}. As shown in Fig.~\ref{fig:comparison} (a), SD replaces autoregressive decoding with a draft-then-verify paradigm, achieving a theoretical speedup proportional to the speed ratio of the draft/target model pairings. However, when applied to Video-LLMs, existing SD frameworks face two new challenges. \textbf{(i) Sequential Execution}: SD enforces strict sequential execution indicated by the idling gaps in Fig.~\ref{fig:comparison} (a/b), which becomes costly with massive visual token prefixes in Video-LLMs. \textbf{(ii) Entanglement between Speed Ratio and Model Alignment}: visual tokens often contain substantial redundancy and the draft model can be pruned to improve the draft/target speed ratio. However, draft models with pruned visual tokens often struggle to retain both salient visual details and coherent textual grounding to align with the target model's distribution. While SpecVLM attempts to guide the draft model pruning with the target's attention scores~\cite{Ji2025SpecVLMES}, it suffers from the ``positional bias'' problem~\cite{Endo2024FeatherTT}, which in turn hurts model alignment for videos rich in temporal and spatial semantic cues.   

To address these challenges, we propose ParallelVLM, a co-design of parallel SD and draft video token pruning. Our framework combines prefilling/decoding parallelism and an Unbiased Verifier-Guided Pruning (UV-Prune) strategy to expand the draft window size without sacrificing alignment. As illustrated in Fig.~\ref{fig:comparison} (c), ParallelVLM entails Parallel Prefilling (PP) and Parallel Decoding (PD) stages: 1) in PP, the draft model’s \emph{pruned} prefilling executes concurrently with the target’s \emph{full} prefilling, in which pruning decisions are guided by the vision-text variation signals from the target’s early-layer representations. This mitigates positional bias and hides the draft’s pruning/start‑up timing cost under the target’s latency. 2) In PD, draft token generation and target verification proceed in overlapping windows with an optimal window size determined by the enlarged draft/target speed ratio. The draft/target pairings also preserve a high accepted length with the alignment-aware design. Our contributions are summarized as follows:
\begin{itemize}
    \item[\ding{71}] \textbf{Systematic Analysis.} We conduct a comprehensive analysis on SD efficiency for Video-LLMs, identifying the sequential execution bottleneck and the coupling relations between draft/target speed ratio and model alignment.
    \item[\ding{71}] \textbf{Parallel Acceleration Framework.} We propose ParallelVLM, a \emph{training-free and lossless} architecture that leverages parallelism and unbiased pruning to expand the draft window while maintaining robust acceptance rates.
    \item[\ding{71}] \textbf{Significant Speedups.} Extensive experiments on five video understanding benchmarks show ParallelVLM achieves up to \textbf{3.36$\times$} and \textbf{2.42$\times$} speedups for the LLaVA-OneVision-72B and Qwen2.5-VL-32B respectively.
\end{itemize}

%% file: sec/2_preliminary.tex
\section{Preliminary}
\label{sec:preliminary}
\subsection{Video Large Language Models (Video-LLMs)}

A typical Video-LLM consists of: 1) a vision encoder $E$ that transforms input video frames into a sequence of feature vectors; 2) an adapter $g_\theta$ that maps these visual features into the language model's embedding space; and (3) a large language model that processes the combined multimodal input~\cite{Li2024LLaVAOneVisionEV, Bai2025Qwen25VLTR}. Formally, a video input is processed into a sequence of visual tokens $V_{1:m}:=[V_1,...,V_m]$ and the text is tokenized into $X_{1:n}:=[X_1,...,X_n]$. These sequences are concatenated to form a long, multimodal context prefix $[V_{1:m},X_{1:n}]$, where $m$ could be much larger than $n$ for Video-LLMs. Denote $p(\cdot)$ as the model distribution, then the subsequent text is generated autoregressively by conditioning on the text prefix of length $k\ (k\ge n)$ :
\begin{eqnarray}
X_{k+1}\sim p(\ \cdot\ |V_{1:m},X_{1:k}).
\label{video-llm}
\end{eqnarray}
Current visual token pruning methods curtail the video tokens $V_{1:m}$ before or inner LLM processing, changing the generated token distribution to $X_{k+1}^{'}\sim p^{'}(\ \cdot\ |V_{1:m}^{'},X_{1:k})$, inevitably leading to performance degradation.

\begin{figure}[t]
    \centering
    \includegraphics[width=\linewidth]{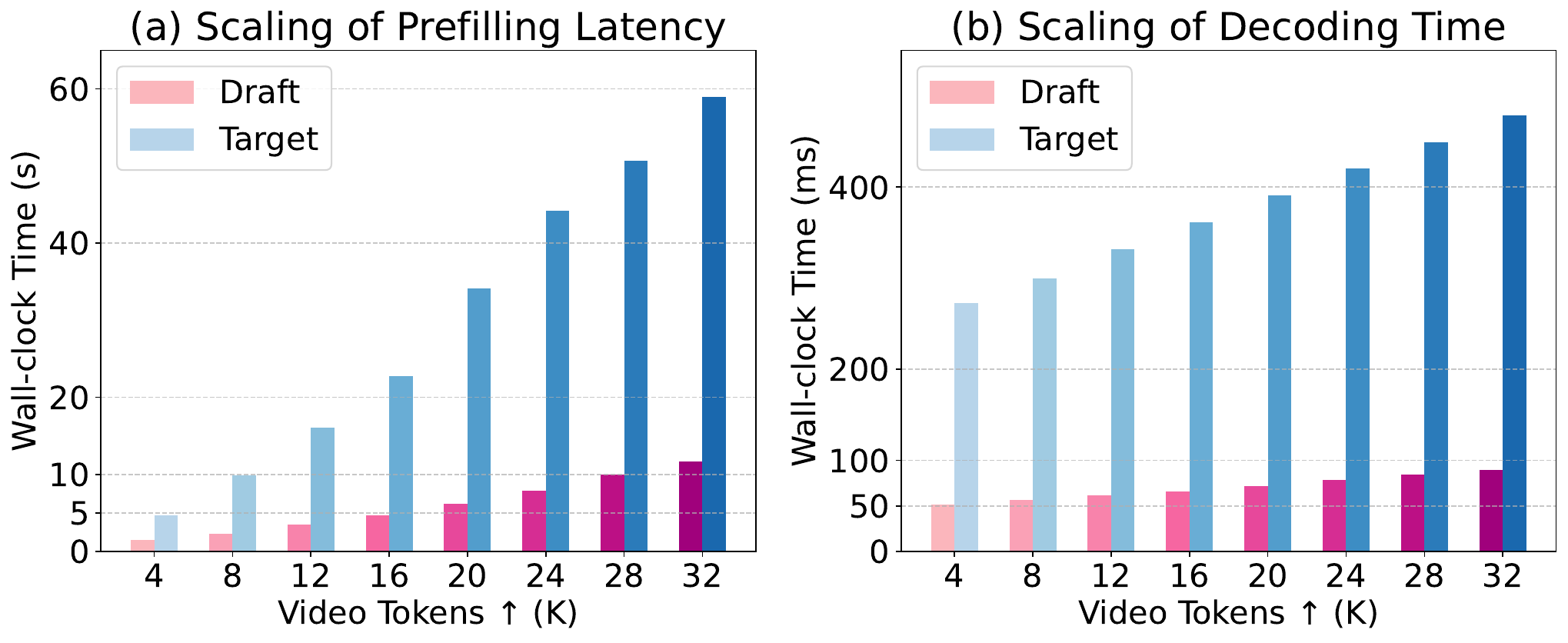}
    \vspace{-0.3in}
    \caption{\small{Scaling of (a) Prefilling Latency and (b) Decoding Time with increasing number of video tokens. Experiments are conducted on LLaVA-OV 7B draft model and 72B target model.}}
    \label{fig:motivation}
\end{figure}

\subsection{Speculative Decoding (SD)}
\label{subsec:pre_sd}
SD accelerates autoregressive LLM inference without altering their output distribution~\cite{Chen2023AcceleratingLL, Leviathan2022FastIF, Xia2022SpeculativeDE}. The core of SD operates on a \emph{draft-then-verify principle}: a lightweight \emph{draft model} $M_q$ generates $\gamma$ candidate tokens. These tokens are then verified in a single, parallel forward pass by the more powerful target model $M_p$. Denote the single forward time of the draft and target model as $T_q$ and $T_p$, respectively, then the speed ratio $c=T_p/T_q$ normally satisfies $c\gg1$. 

The lossless nature of SD is guaranteed by its verification mechanism. With accepted length $k$, for each proposed token $\hat{X}_{k+t}$ in the draft sequence $\hat{X}_{k+1:k+\gamma}$, it is accepted with a probability determined by rejection sampling, 
\begin{eqnarray}
P(\text{accept}) = \mathrm{min} \left\{1,\frac{p(\hat{X}_{k+t}|X_{1:k},\hat{X}_{k+1:k+t-1})}{q(\hat{X}_{k+t}|X_{1:k},\hat{X}_{k+1:k+t-1})}\right\}\nonumber,
\label{target_sampling}
\end{eqnarray}
where $p(\cdot)$ and $q(\cdot)$ are the probability distributions of the target and draft models. If $\hat{X}_{k+t}$ is rejected, then a new token is sampled from $\mathrm{norm}(\mathrm{max}(0,p(X_{k+t})-q(X_{k+t})))$. 

%% file: sec/3_motivation.tex
\section{Why SD Falls Short for Video-LLMs?} 
\label{sec:motivation}
\noindent\textbf{Challenge 1: Sequential Execution Bottleneck}. In vanilla SD, the draft and target models execute prefilling and decoding sequentially on the entire video token sequence. As shown in Fig.~\ref{fig:motivation}, the prefilling latency and decoding time become heavier with the increase of video tokens for both the draft and target models. For example, with $24$K video tokens, the target’s prefilling takes $44.23$ s, and the draft prefilling adds another $7.92$ s. When it turns to decoding, the draft decoding time is $T_q=78$ ms and the target decoding accounts for  $T_p=420$ ms. With a typical draft window size of $\gamma=5$ for the LLaVA-OV-7B/72B combination, the entire cycle takes $\gamma \cdot T_q  + T_p \approx 2 T_p$. This means that the hardware remains idle for almost 20\% of the span during prefilling and 50\% during decoding due to sequential scheduling. As Video-LLMs require more powerful draft models to understand videos, the sequential waiting becomes even more expensive and persists as a \emph{fixed tax}, whereas the vanilla SD provides no measure to ``hide'' this overhead. 

\begin{figure}[t]
    \centering
\includegraphics[width=1.0\linewidth]{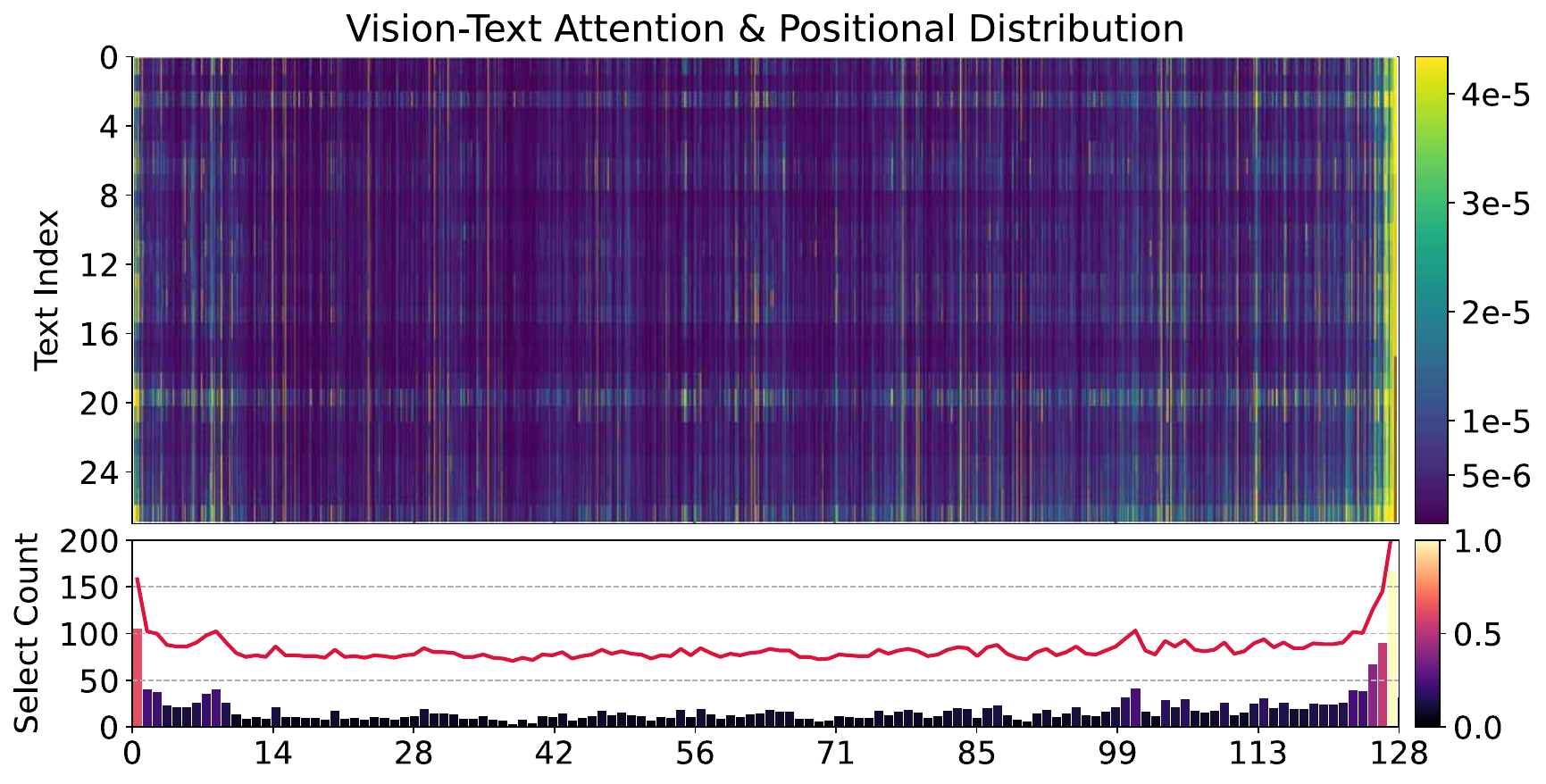}
    \vspace{-0.3in}
    \caption{\small{Target model's attention guidance for draft video token pruning. We observe positional bias on both ends of the video tokens. When retention rate is 10\%, frames 1, 125-128 accumulate up to 20.9\% of total selected tokens within only 4\% position width.}}
    \label{fig:attn}
\vspace{-0.2in}
\end{figure}

\begin{figure*}
    \centering
\includegraphics[width=1.0\linewidth]{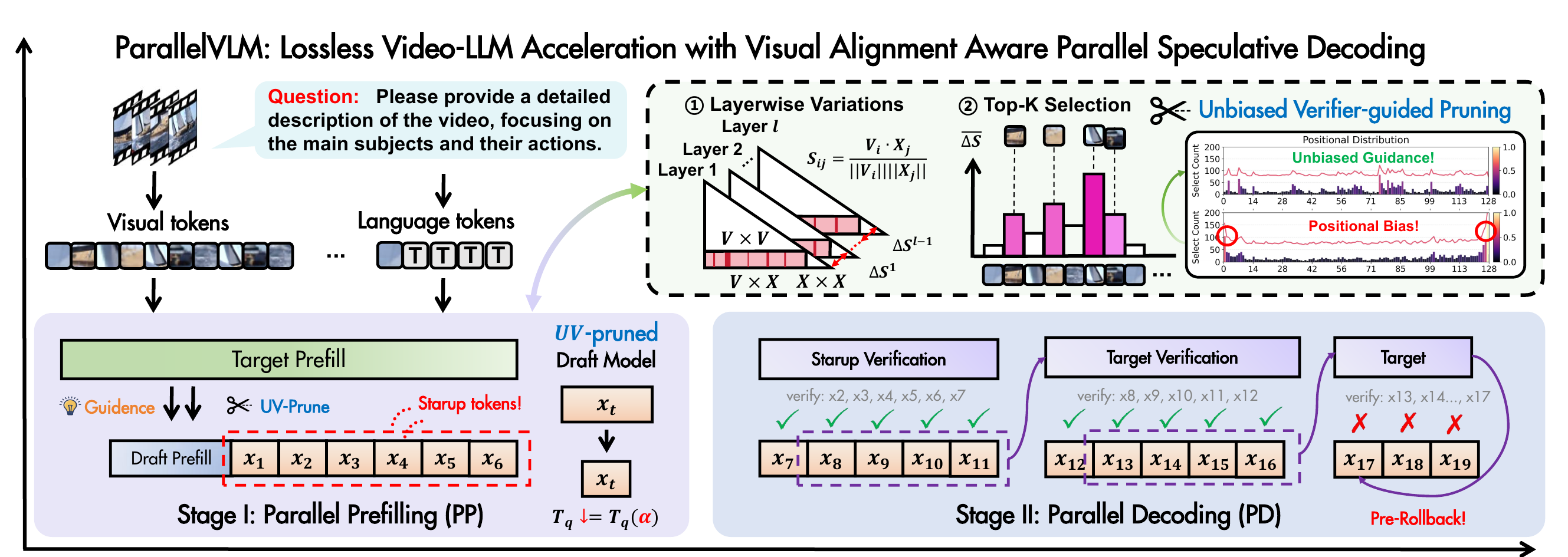}
    \caption{\small{The proposed ParallelVLM consists of Parallel Prefilling (PP), Parallel Decoding (PD) and the Unbiased Verifier-guided Pruning (UV-Prune). (a) During PP stage: draft model prefilling and pruning executes \emph{in parallel} with target model prefilling, where UV-Prune transfers salient alignment semantics from the target model to the draft model without ``positional bias". (b) During PD stage: the reduced draft decoding time $T_q$ enables an enlarged window size $\gamma$ \emph{in parallel} with verification, meanwhile maintaining high acceptance rates $\tau$.}}
    \label{fig:architecture}
\vspace{-0.2in}
\end{figure*}

\noindent\textbf{Challenge 2: Entanglement between Speed Ratio $c$ and Model Alignment}. The achievable acceleration in SD is governed by: (i) \textbf{speedup ratio $c=T_p/T_q$}, which decides how many tokens can be speculated within the target’s verification time; (ii) \textbf{model alignment} between draft and target outputs, which controls the acceptance rates. Since video token sequences are known to have high redundancy~\cite{FastVChen2024AnII,Bolya2022TokenMY}, $T_q$ can be reduced by pruning the draft visual tokens to improve $c$. However, a smaller draft model with pruned tokens often struggles to retain both salient visual details and coherent textual grounding to align with the target model's full-context distribution. This consequently results in a sharp decrease of the acceptance rates. 

To maintain alignment, effective signals from the target model should be utilized to guide the pruning process. SpecVLM~\cite{Ji2025SpecVLMES} leverages attention scores as guidance~\cite{Ji2025SpecVLMES}. Although seemingly intuitive, this approach is undermined by the \textbf{positional bias} phenomenon. As shown in Fig.~\ref{fig:attn}, 21\% of the attention-guided video tokens are chosen within only 4.0\% position width (frames 1, 125-128) simply because they are ``attention sinks''~\cite{Xiao2023EfficientSL} or close to the text query~\cite{Endo2024FeatherTT}, not because they are semantically important for visual reasoning and textual grounding. This is particularly problematic for video tasks where temporal coherence and mid-video content matter critically. In addition, attention-guided pruning is incompatible with FlashAttention~\cite{Dao2023FlashAttention2FA} that adopts block-wise kernels without explicit computation of attention scores. This further prevents them from addressing the I/O bottleneck of Video-LLMs.

%% file: sec/4_method.tex
\section{Design of ParallelVLM}
\label{sec:method}

\subsection{UV-Prune: Unbiased Verifier-Guided Pruning}
Instead of asking ``which tokens does the model attend to'', we turn to answer ``\emph{which tokens become increasingly aligned with the text query as information flows through the target model layers?}'' Tokens whose alignment with the textual query increases across layers are likely carrying task-essential semantics; conversely, tokens whose alignment stays flat or decreases are more likely redundant for the current query. This suggests a new pruning strategy that captures the semantics rather than raw attention scores. Hence, we utilize vision-text similarity variations of the token representations between the shallow layers as the criterion. Specifically, the vision-text similarity is defined as the cosine similarity between the prefilled video tokens $V_i$ and text tokens $X_j$ at the pre-defined early layer set $l \in \{1,...,L\}$ from the target model:
\begin{equation}
\small
\mathcal{S}_{ij} = \frac{V_i\cdot X_j}{\Vert{V_i}\Vert\Vert{X_j}\Vert}.
\label{similarity}
\end{equation}
We then compute the \emph{similarity variation} between consecutive layers and sum over all text tokens,
\begin{equation}
\small
\overline{\Delta S_i}= \sum_{j=1}^n\sum_{l=1}^L(\mathcal{S}_{ij}^{l}-\mathcal{S}_{ij}^{l-1}).
\label{average_similarity}
\end{equation}
A large positive $\overline{\Delta S_i}$ indicates that video token $V_i$ has gained cross-modal relevance through the network. Given a pruning ratio $\alpha\ (0<\alpha<1)$, we keep the Top-K tokens,
\begin{equation}
\small
V^* = \mathbf{TopK}(\overline{\Delta S_1},...,\overline{\Delta S_m}).
\label{topk_selection}
\end{equation}
As shown in Fig.~\ref{fig:architecture}, the similarity variations are signals measured within the target model during its prefilling stage. Thus, pruning the draft model is directly guided by the target’s own semantics, which provides robust alignment and retains high acceptance rates. Unlike attention scores which bias toward early and query-adjacent tokens, our method reflects a knowledge transfer from the target to the draft model, without being constrained by absolute positions. This enables the natural retention of mid-video frames if there is relevance, preserving both spatial and temporal coherence. 

\subsection{ParallelVLM Pipeline}
The ParallelVLM framework executes SD in two fully parallel stages with an overview shown in Fig.~\ref{fig:architecture}.

\noindent\textbf{Stage I: Parallel Prefilling (PP)}

\noindent\ding{202}\textbf{ Execution Model.} We launch two independent processes: 1) the target process prefills the full video token sequence $V_{1:m}$ with the target model $M_p$; 2) the draft process only prefills the pruned video token sequence $V^{*}$ with the draft model $M_q$. Formally, we have:
\begin{equation}
\small
\text{Parallel}\begin{cases}
\text{Draft: } \text{KV}_q\leftarrow M_q(V^{*}, X_{1:n}) \\
\text{Target: } \text{KV}_p\leftarrow M_p(V_{1:m},X_{1:n}),
\end{cases}\nonumber
\label{eq:parallel_prefilling}
\end{equation}
in which $X_{1:n}$ are the text tokens. Since the target prefilling latency is long, all draft-side operations including target-guided pruning, draft prefilling, and startup token generation can be completed within this span. This completely hides the draft prefilling time and removes the sequential prefilling bottleneck in vanilla SD. 

\noindent\ding{203}\textbf{ Verifier-Guided Draft Model Pruning.}
Once the target process produces intermediate results of the pre-defined early layers, it broadcasts the target model's layer-wise token representations to the draft process. The draft process then executes UV-Prune with pruning ratio $\alpha$ to select the top $1-\alpha$ fraction of video tokens most relevant from the \emph{target’s} perspective,
\begin{equation}
\small
V^{*} = \mathbf{UV}\text{-}\mathbf{Prune}(M_p(V_{1:m},X_{1:n}),\alpha).
\label{Prune}
\end{equation}

\noindent\ding{204}\textbf{ Startup Draft Tokens.} The remaining idle time before target prefilling completes is used by the draft process to generate an initial speculative window of $\gamma$ tokens. These startup tokens are immediately available for verification when Stage I ends, which guarantees that Stage II begins without a warm-up delay. 

\noindent\textbf{Results of Stage I:} After parallel prefilling, (i) the draft model is ready to generate tokens rapidly using the pruned video context. (ii) the target model holds full-context KV caches for lossless verification. (iii) A startup window (e.g., $\gamma$ tokens) is already queued for the first verification round.

\noindent\textbf{Stage II: Parallel Decoding (PD)}

\noindent\ding{205}\textbf{ Execution Model.} The draft and target models operate in a coordinated pipeline: in decoding round $i$ with accepted prefix of length $k$, the draft model $M_q$ generates a window of $\gamma$ candidate tokens $\hat{X}_{k+\gamma+1:k+2\gamma}$ from the pruned context $V^{*}$; simultaneously, the target model $M_p$ verifies $\gamma$ tokens drafted in round $i-1$ using the full context $V_{1:m}$,
\begin{equation}
\small
\text{Parallel}\begin{cases}
\text{Draft}_i:\quad X_{k+
\gamma+t}\sim q(\cdot|V^{*},X_{1:k+\gamma+t-1})\\
\text{Verify}_{i-1}:X_{k+1:k+\gamma}\sim p(\cdot|V_{1:m},X_{1:k})
\end{cases}\nonumber
\label{eq:parallel_decode}
\end{equation}
For $t=1,...,\gamma$, each candidate token $\hat{X}_{k+\gamma+t}$ is accepted with rejection sampling illustrated in Sec.~\ref{subsec:pre_sd}. If a draft token fails verification, sampling resumes from the target distribution for that position. Then the pipeline performs pre-rollback and restart drafting~\cite{Liu2024ParallelSD}.

\noindent\ding{206}\textbf{ Window Size $\gamma$ Selection.} The optimal draft window size $\gamma$ is determined by the pruned speed ratio $c^{*}$:
\begin{equation}
\small
\gamma  = c^{*} = \frac{T_p}{\ \ \ T_q(\alpha)}.
\label{Prefill}
\end{equation}
$T_q(\alpha)$ is the draft decoding time at pruning ratio $\alpha$ and $T_p$ is the target's verification time with full contexts. As an example, without pruning: $T_q=78$ ms, $T_p=420$ ms, $c=T_p/T_q\approx5$, so the optimal window size is $\gamma=5$; with pruning ratio $\alpha=0.9$, $T_q=47$ ms, $T_p=420$ ms, $c=T_p/T_q\approx9$, yielding $\gamma=9$, which is $1.8\times$ larger. 

\noindent\ding{207}\textbf{ Acceptance Rate $\tau$ Dependency.} The actual speedup depends on the pruning quality through acceptance rate $\tau$, $\tau  = \tau(\mathcal{P},\alpha)$. A well-algined pruning strategy $\mathcal{P}$ keeps $\tau$ high even at aggressive pruning ratios ($\alpha= 0.9$).

\subsection{Theoretical Speedup Analysis}
\label{theory}
ParallelVLM accelerates Video-LLMs by overlapping the draft and verification stages, and further addressing the visual redundancy of the draft model with the target's alignment. If the original draft/target speed ratio is $c$ and the average acceptance rate is $\tau$, the theoretical speedup vs. autoregressive decoding is analyzed by the following theorem.
\begin{theorem}[Speedup Ratio of ParallelVLM]
\label{thm:paravlm_speedup}
Denote the video token pruning method as $\mathcal{P}$ and the pruning ratio as $\alpha \in [0,1]$. ParallelVLM overlaps the draft and verification stages, corresponding to an enlarged draft/target speed ratio $c^{*}(\alpha)>c$, and a robust average acceptance rate $\hat{\tau}(\mathcal{P},\alpha)$. For the draft window size of $\gamma=c^{*}(\alpha)$, the speedup ratio of ParallelVLM is,
\begin{eqnarray}
\small
\mathcal{V}_{\mathrm{Vi}PSD} = \frac{\hat{\tau}(\mathcal{P},\alpha)\cdot\gamma\cdot T_p}{\gamma\cdot T_q(\alpha)}=\hat{\tau}(\mathcal{P},\alpha)\cdot c^{*}(\alpha).
\label{psd_speedup}
\end{eqnarray}
\end{theorem}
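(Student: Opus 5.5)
The plan is to compute the speedup as a ratio of throughputs: tokens produced per unit wall-clock time under ParallelVLM, divided by tokens per unit time under vanilla autoregressive decoding of $M_p$. Autoregressive decoding emits one token per target forward pass, so its throughput is $1/T_p$. The work is therefore to find the steady-state duration of one decoding round of Stage II and the expected number of tokens committed per round.

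\textbf{Step 1: prefilling.} I would first argue that Stage I contributes no extra latency relative to the autoregressive baseline. By the Parallel Prefilling construction, UV-Prune, the draft prefilling on $V^{*}$, and the generation of the startup window all run concurrently with the target's full prefilling. These draft-side operations are assumed to finish within that span, which the paper motivates by the large gap between target and draft prefilling times. Both the baseline and ParallelVLM must perform the same target prefilling, so this cost cancels. This leaves only the decoding throughput to compare.

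\textbf{Step 2: round time.} In round $i$, the draft generates $\gamma$ tokens, taking time $\gamma\, T_q(\alpha)$. Meanwhile, the target verifies the previous window in one parallel forward pass of time $T_p$; I would model verification of $\gamma$ tokens as costing the same as a single decoding step, as is standard in memory-bound decoding. The round length is then $\max\{\gamma T_q(\alpha), T_p\}$. With the choice $\gamma=c^{*}(\alpha)=T_p/T_q(\alpha)$ from the window-size rule, the two branches are balanced, so the round time equals $\gamma T_q(\alpha)=T_p$ and neither process idles.

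\textbf{Step 3: tokens per round.} I would define $\hat\tau(\mathcal{P},\alpha)$ as the average fraction of the $\gamma$ drafted tokens committed per round in steady state. This fraction absorbs the corrected or resampled token on rejection and the losses from pre-rollback and restart drafting. The expected tokens per round is then $\hat\tau\gamma$, and the throughput is $\hat\tau\gamma/(\gamma T_q(\alpha))$. Dividing by $1/T_p$ gives $\hat\tau\gamma T_p/(\gamma T_q(\alpha))$, which simplifies to $\hat\tau\, c^{*}(\alpha)$, matching the theorem's formula. That $c^{*}(\alpha)>c$ follows because pruning strictly reduces the draft's context length, so $T_q(\alpha)<T_q$.

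\textbf{Main obstacle.} The delicate step is Step 3: justifying that one number $\hat\tau$ captures the committed fraction in a pipelined scheme. After a rejection, the concurrently drafted window is built on a wrong prefix and must be discarded, so the pipeline partially drains. My first attempt would be to treat $\hat\tau$ definitionally, as an empirical steady-state average that includes rollback losses. This makes the theorem an accounting identity rather than a bound from i.i.d. acceptance. Alternatively, one could bound it below by the i.i.d. formula $(1-\tau^{\gamma+1})/((1-\tau)\gamma)$, with a factor accounting for the wasted window after each rejection.
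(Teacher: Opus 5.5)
Your argument is essentially the paper's proof. The appendix computes the per-token decoding time of parallel SD under partial acceptance as $\max(\gamma t, ct)/(\tau\gamma)$ and sets $\gamma=c$ to obtain speedup $\tau c$; the theorem is this result with $T_q$ replaced by $T_q(\alpha)$ and $\tau$ by $\hat{\tau}(\mathcal{P},\alpha)$, and it treats the acceptance rate definitionally as the committed fraction of each window, as you do (the rollback-aware truncated-geometric analysis is separate). Your prefilling step is not needed, because the reported speedup is a decoding-throughput ratio; a shared additive prefill cost would not literally cancel in a ratio of total latencies anyway.
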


With the optimal acceptance rate $\hat{\tau}(\mathcal{P},\alpha)=1$, $\mathcal{V}_{\mathrm{Vi}PSD}=c^{*}(\alpha)$ represents $\frac{c^{*}(\alpha)}{c}\times$ speedup than Parallel SD~\cite{Liu2024ParallelSD}. For example, the speedup is $1.8\times$ for LLaVA-OV-7B/72B and $1.6\times$ for Qwen2.5-VL-7B/32B. We refer to the Appendix for detailed proofs and a comprehensive comparison with vanilla SD. 


%% file: sec/5_experiment.tex
\begin{table*}[t]
\caption{Comparison of lossless speculative decoding methods on multiple video understanding tasks. The default window size $\gamma$ of the baselines is set to 5, except for STD~\cite{Zhang2024SparseVLMVT} with $\gamma=9$. ParallelVLM achieves adaptive draft length~\cite{Liu2024ParallelSD}, and we dynamically adjust the $\gamma$ according to the speed ratio $c=T_p/T_q$ to mitigate mutual waiting. From up to bottom, $\gamma=\left\{3,9,5,2,2\right\}$  at the pruning ratio $\alpha=0.9$.}
\label{tab:lossless_comparison}
\centering
\resizebox{\linewidth}{!}{%
\begin{tabular}{l l cc cc cc cc cc c}
\toprule
\multirow{2}{*}{Models} & \multirow{2}{*}{Methods} & \multicolumn{2}{c}{VideoDetailCaption} & \multicolumn{2}{c}{VideoMME} & \multicolumn{2}{c}{MVBench} & \multicolumn{2}{c}{MVLU} & \multicolumn{2}{c}{LongVideoBench}&\multirow{2}{*}{Avg.}\\
\cmidrule(lr){3-4} \cmidrule(lr){5-6} \cmidrule(lr){7-8} \cmidrule(lr){9-10} \cmidrule(lr){11-12} & &$M$ & Speedup & $M$& Speedup & $M$ & Speedup & $M$ & Speedup & $M$ & Speedup\\
\midrule
\multirow{6}{*}{\makecell[c]{LLaVA-OV\\(0.5B \& 7B)} }
& Vanilla SD &  4.26& 1.08$\times$&  4.34& 1.10$\times$ &  4.18 & 1.05 $\times$ & 4.22 & 1.06$\times$ & 4.32& 1.01$\times$& 1.06$\times$  \\
& OnSD & 3.73&  1.40$\times$&  3.67&  1.38$\times$&  3.78& 1.32$\times$ &  3.56&  1.29$\times$&  3.76& 1.36$\times$&  1.35$\times$\\
& SD-Tree & 5.08& 1.28$\times$  &  5.05&  1.28$\times$ & 4.97 & 1.25$\times$  & 4.90 & 1.24$\times$  & 5.12 & 1.26$\times$& 1.26$\times$\\
& SpecVLM &  4.89&  1.83$\times$&  4.83&  1.82$\times$& 4.76 & 1.77$\times$ &  4.71&  1.76$\times$& 4.92 & 1.86$\times$& 1.81$\times$\\
&\cellcolor{pink!30}ParallelVLM & \cellcolor{pink!30}\textbf{8.47} & \cellcolor{pink!30} \textbf{2.12$\times$}& \cellcolor{pink!30}\textbf{7.86} & \cellcolor{pink!30}\textbf{2.06$\times$} & \cellcolor{pink!30}\textbf{8.64}  & \cellcolor{pink!30}\textbf{2.15$\times$}&
\cellcolor{pink!30}\textbf{8.24} & \cellcolor{pink!30}\textbf{2.14$\times$}&
\cellcolor{pink!30}\textbf{8.35} & \cellcolor{pink!30}\textbf{2.10$\times$}&
\cellcolor{pink!30}\textbf{2.11$\times$}   \\

\midrule
\multirow{6}{*}{\makecell[c]{LLaVA-OV\\(7B \& 72B)} }
& Vanilla SD   & 4.14 &  1.98$\times$&  4.26&  2.06$\times$&  3.98&  1.92 $\times$&  4.06& 2.01$\times$& 4.27& 2.10$\times$& 2.01$\times$\\
& OnSD    &  3.67 &  2.14$\times$&  3.65& 2.27$\times$&  3.48&  2.09$\times$&  3.69& 2.23$\times$&  3.74& 2.13$\times$&  2.17$\times$\\
& SD-Tree &  4.64 & 2.18$\times$& 4.62 &  2.33$\times$&  4.70 &  2.26$\times$& 4.53 &  2.17$\times$&  4.78& 2.36$\times$&  2.26$\times$\\
& SpecVLM & 4.51 & 2.76$\times$ &  4.49& 2.80$\times$ & 4.46&  2.68$\times$&  4.26&  2.62$\times$& 4.39& 2.82$\times$ & 2.74$\times$\\
&\cellcolor{pink!30}ParallelVLM & \cellcolor{pink!30}\textbf{6.82}& \cellcolor{pink!30}\textbf{3.43$\times$} & \cellcolor{pink!30}\textbf{6.85} & \cellcolor{pink!30}\textbf{3.38$\times$} & \cellcolor{pink!30}\textbf{6.81} & \cellcolor{pink!30}\textbf{3.28$\times$} & \cellcolor{pink!30}\textbf{6.72} & \cellcolor{pink!30}\textbf{3.24$\times$} & \cellcolor{pink!30}\textbf{6.93} & \cellcolor{pink!30}\textbf{3.46$\times$} & \cellcolor{pink!30}\textbf{3.36$\times$}\\

\midrule
\multirow{6}{*}{\makecell[c]{Qwen2.5-VL\\(7B \& 32B)} }
& Vanilla SD  & 3.37 & 1.26$\times$ &  3.44&  1.24$\times$&  3.38&  1.23$\times$&  3.21&  1.23$\times$&  3.47& 1.31$\times$& 1.25$\times$\\
& OnSD   & 2.76 & 1.38$\times$ &  2.84&  1.32$\times$ &  2.67&  1.27$\times$ &  2.69&  1.34$\times$ &  2.89 & 1.36$\times$ &  1.33$\times$\\
& SD-Tree & 4.21 & 1.60$\times$ & 4.32 & 1.54$\times$ & 4.17 & 1.53$\times$ & 4.09 &  1.52$\times$&  4.23& 1.55$\times$&  1.55$\times$\\
& SpecVLM & 4.18 & 2.11$\times$ &  4.31&  2.13$\times$&  4.12& 2.02$\times$& 4.06 & 2.13$\times$ & 4.28& 2.18$\times$& 2.11$\times$ \\
&\cellcolor{pink!30}ParallelVLM&\cellcolor{pink!30}\textbf{4.23} &\cellcolor{pink!30}\textbf{2.40$\times$}&\cellcolor{pink!30}\textbf{4.37}&\cellcolor{pink!30}\textbf{2.42$\times$}&\cellcolor{pink!30}\textbf{4.24}&\cellcolor{pink!30}\textbf{2.41$\times$}&\cellcolor{pink!30}\textbf{4.15}&\cellcolor{pink!30}\textbf{2.41$\times$}&\cellcolor{pink!30}\textbf{4.43}&\cellcolor{pink!30}\textbf{2.45$\times$}&\cellcolor{pink!30}\textbf{2.42$\times$}\\
\midrule
\midrule
\multirow{4}{*}{\makecell[c]{LLaVA-OV\\(7B \& 7B)} }
& OnSD    &  3.60 & 0.87$\times$ & 3.64 & 0.89$\times$&  3.57&  0.84$\times$&  3.28& 0.82$\times$ & 3.12 & 0.78$\times$& 0.84$\times$\\
& SpecVLM &  4.94&  1.21$\times$& 4.92&  1.22$\times$& 5.06&  1.23$\times$&  4.74&  1.18$\times$&  4.85&  1.25$\times$&  1.22$\times$\\
& STD &  8.04&  1.25$\times$&  8.21& 1.28$\times$ &  7.71&  1.21$\times$&  8.16&  1.26$\times$&  8.04& 1.22$\times$& 1.24$\times$\\
&\cellcolor{pink!30}ParallelVLM & \cellcolor{pink!30}\textbf{14.8}& \cellcolor{pink!30}\textbf{1.53$\times$} & \cellcolor{pink!30}\textbf{15.6} & \cellcolor{pink!30}\textbf{1.55$\times$} & \cellcolor{pink!30}\textbf{13.2} & \cellcolor{pink!30}\textbf{1.49$\times$}& \cellcolor{pink!30}\textbf{15.7} & \cellcolor{pink!30}\textbf{1.59$\times$} & \cellcolor{pink!30}\textbf{15.0}& \cellcolor{pink!30}\textbf{1.57$\times$} & \cellcolor{pink!30}\textbf{1.55$\times$}\\
\midrule
\multirow{4}{*}{\makecell[c]{Qwen2.5-VL\\(7B \& 7B)} }
& OnSD   &  3.39&  0.82$\times$&  2.98& 0.78$\times$ &  3.34& 0.83$\times$ & 3.12 & 0.74$\times$ & 3.27&  0.80$\times$ & 0.79$\times$\\
& SpecVLM & 4.80&  1.17$\times$&  4.91&  1.20$\times$&  4.93&  1.18$\times$&  4.98&  1.25$\times$&  4.85& 1.24$\times$& 1.21$\times$\\
& STD &  8.03& 1.22$\times$ & 7.89 & 1.25$\times$ & 8.11 & 1.24$\times$ &  8.02& 1.18$\times$ & 8.23 & 1.27$\times$& 1.23$\times$\\
&\cellcolor{pink!30}ParallelVLM & \cellcolor{pink!30}\textbf{14.4} & \cellcolor{pink!30}\textbf{1.52$\times$}&\cellcolor{pink!30}\textbf{14.9} & \cellcolor{pink!30}\textbf{1.53$\times$}& \cellcolor{pink!30}\textbf{14.8}& \cellcolor{pink!30}\textbf{1.47$\times$}& \cellcolor{pink!30}\textbf{14.6}& \cellcolor{pink!30}\textbf{1.50$\times$}& \cellcolor{pink!30}\textbf{15.3}& \cellcolor{pink!30}\textbf{1.53$\times$}&\cellcolor{pink!30}\textbf{1.51$\times$}\\
\midrule
\bottomrule
\end{tabular}
}
\end{table*}

\section{Experiments}
\label{sec:experiments}
\subsection{Experimental Setting}
\noindent\textbf{Baseline Methods.} For Video-LLMs, autoregressive decoding (AR) is the default decoding setting ($1.00\times$ baseline). We evaluate the proposed ParallelVLM against a series of \emph{lossless} speculative decoding methods and \emph{lossy} visual token pruning methods. For lossless speculative decoding, we include five methods, which are all plug-and-play with no need of model training.
\begin{itemize}
\item[$\bullet$] Vanilla SD~\cite{Chen2023AcceleratingLL}: Standard implementation where a draft model generates $\gamma$ tokens for target verification.  
\item[$\bullet$] OnSD~\cite{Gagrani2024OnSD}: Light-weight text-only LLM as the draft model for VLM speculative decoding.
\item[$\bullet$] SD-Tree~\cite{Li2024EAGLESS}: Speculative decoding with a 5-layer draft tree structure for multi-branch candidate paths.
\item[$\bullet$] SpecVLM~\cite{Ji2025SpecVLMES}: Verifier's attention-guided draft token pruning for improved Video-LLM speculative decoding.
\item[$\bullet$] STD~\cite{Zhang2025SparsetoDenseAF}: KV cache sparsification of the dense target as the draft model for Video-LLM speculative decoding.
\end{itemize}

For visual token pruning, we collect four dynamic methods that employ the intra-LLM information. We include one-stage importance ranking methods FastV~\cite{FastVChen2024AnII}, PDrop~\cite{Xing2024PyramidDropAY}, and two-stage merge-then-sparsify methods SparseVLM~\cite{Zhang2024SparseVLMVT}, DyCoke~\cite{Tao2024DyCokeD}. Unlike lossless SD methods, pruning is directly performed on the target model.

\noindent\textbf{Benchmarks and Metrics.} Since ParallelVLM focuses on long‑paragraph decoding, we evaluate its performance in video captioning and video description tasks. The benchmarks we use include VideoDetailCaption~\cite{VideoDetailCaption}, VideoMME~\cite{Fu2024VideoMMETF},  MVBench~\cite{Li2023MVBenchAC}, MVLU~\cite{Zhou2024MLVUAC} and LongVideoBench~\cite{Wu2024LongVideoBenchAB}. For metrics, we assess the acceleration effects with: (i) speedup ratio vs. the autoregressive baseline; (ii) the mean accepted length $M$ to reflect the speculation accuracy; (iii) token-wise acceptance ratio $\mathcal{A}$ to measure the distribution shifts of lossy visual pruning methods. Due to the lack of long, detailed ground truth description labels, we leverage the target model with full context for evaluation.

\noindent\textbf{Implementation Details.} Recall that the acceleration of SD mainly depends on the speedup ratio $c$ and alignment between the draft/target models. For comprehensive evaluation, we evaluate five draft/target pairings across LLaVA-OneVision~\cite{Li2024LLaVAOneVisionEV} and Qwen2.5-VL~\cite{Bai2025Qwen25VLTR}, covering $c$ from $1-5$ (rounded to nearest integer): (i) LLaVA-OV ($0.5$B \& $7$B, $c=2$), (ii) LLaVA-OV ($7$B \& $72$B, $c=5$), (iii) Qwen2.5-VL ($7$B \& $32$B: $c=3$), (iv) LLaVA-OV ($7$B \& $7$B, $c=1$),  (v) Qwen2.5-VL ($7$B \& $7$B: $c=1$). For LLaVA-OneVision, we sample $128$ frames for each video with a total of $196 \times 128 = 25,088$ video tokens; for Qwen2.5-VL, we keep the same number of video tokens accordingly. The max output length is set to $K=512$. All the experiments are conducted with 8$\times$ L40S GPUs.

\begin{table*}[t]
\caption{Comparison of lossy visual token pruning methods on multiple video understanding tasks. ParallelVLM achieves a superior speedup ratio losslessly. (The minor drops stem from the inherent uncertainty of LLMs and the precision limits of hardware numerics.)}
\label{tab:lossy_methods}
\centering
\resizebox{\linewidth}{!}{
\begin{tabular}{l cc cc cc cc cc cc}
\toprule
\multirow{2}{*}{Methods} & \multicolumn{2}{c}{VideoDetailCaption} & \multicolumn{2}{c}{VideoMME} & \multicolumn{2}{c}{MVBench} & \multicolumn{2}{c}{MVLU} & \multicolumn{2}{c}{LongVideoBench} & \multicolumn{2}{c}{Avg.}\\
\cmidrule(lr){2-3} \cmidrule(lr){4-5} \cmidrule(lr){6-7} \cmidrule(lr){8-9} \cmidrule(lr){10-11} \cmidrule(lr){12-13}
 & $\mathcal{A}$ & Speedup & $\mathcal{A}$ & Speedup & $\mathcal{A}$ & Speedup & $\mathcal{A}$ & Speedup & $\mathcal{A}$ & Speedup & $\mathcal{A}$ & Speedup\\
\midrule
\multicolumn{13}{c}{LLaVA-OV-72B, \emph{Average Retention Ratio=10\%}} \\
\midrule
FastV    & 83.2\% & 1.60$\times$ & 85.7\% & 1.65$\times$ & 84.1\% & 1.62$\times$ & 86.3\% & 1.66$\times$ & 83.7\% & 1.62$\times$ & \cellcolor{gray!20}84.6\% & \cellcolor{gray!20}1.63$\times$\\
SparseVLM   & 88.5\% & 1.58$\times$ & 86.2\% & 1.61$\times$ & 87.9\% & 1.59$\times$ & 85.8\% & 1.63$\times$ & 87.1\% & 1.60$\times$ & \cellcolor{gray!20}87.1\% & \cellcolor{gray!20}1.60$\times$\\
P-Drop & 86.5\% & 1.55$\times$ & 88.9\% & 1.59$\times$ & 89.2\% & 1.57$\times$ & 87.3\% & 1.60$\times$ & 87.1\% & 1.59$\times$ & \cellcolor{gray!20}87.8\% & \cellcolor{gray!20}1.58$\times$\\
DyCoke & 92.1\% & 1.44$\times$ & 91.5\% & 1.48$\times$ & 89.7\% & 1.46$\times$ & 90.3\% & 1.49$\times$ & 90.9\% & 1.47$\times$ & \cellcolor{gray!20}90.9\% & \cellcolor{gray!20}1.47$\times$\\
\cellcolor{pink!30}ParallelVLM & \cellcolor{pink!30}\textbf{98.2\%} & \cellcolor{pink!30}\textbf{3.43$\times$} & \cellcolor{pink!30}\textbf{99.7\%} & \cellcolor{pink!30}\textbf{3.38$\times$} & \cellcolor{pink!30}\textbf{99.5\%} & \cellcolor{pink!30}\textbf{3.28$\times$} &
\cellcolor{pink!30}\textbf{99.1\%} & \cellcolor{pink!30}\textbf{3.24$\times$} & \cellcolor{pink!30}\textbf{98.9\%} & \cellcolor{pink!30}\textbf{3.46$\times$} &
\cellcolor{pink!30}\textbf{99.1\%} &
\cellcolor{pink!30}\textbf{3.36$\times$}\\
\midrule
\multicolumn{13}{c}{Qwen2.5-VL-32B, \emph{Average Retention Ratio=10\%}}  \\
\midrule
FastV    & 83.7\% & 1.51$\times$ & 81.9\% & 1.53$\times$ & 80.8\% & 1.55$\times$ & 82.6\% & 1.56$\times$ & 82.5\% & 1.55$\times$ & \cellcolor{gray!20}82.3\% & \cellcolor{gray!20}1.54$\times$\\
SparseVLM   & 86.9\% & 1.48$\times$ & 84.3\% & 1.51$\times$ & 85.7\% & 1.49$\times$ & 83.8\% & 1.52$\times$ & 86.3\% & 1.50$\times$ & \cellcolor{gray!20}85.4\% & \cellcolor{gray!20}1.50$\times$\\
P-Drop & 83.2\% & 1.49$\times$ & 85.9\% & 1.53$\times$ & 84.1\% & 1.51$\times$ & 86.4\% & 1.52$\times$ & 84.0\% & 1.55$\times$ & \cellcolor{gray!20}84.7\% & \cellcolor{gray!20}1.52$\times$\\
DyCoke & 88.9\% & 1.42$\times$ & 86.7\% & 1.45$\times$ & 87.3\% & 1.43$\times$ & 89.1\% & 1.46$\times$ & 85.9\% & 1.44$\times$ & \cellcolor{gray!20}87.6\% & \cellcolor{gray!20}1.44$\times$\\
\cellcolor{pink!30}ParallelVLM & \cellcolor{pink!30}\textbf{98.8\%} & \cellcolor{pink!30} \textbf{2.40$\times$}& \cellcolor{pink!30}\textbf{98.2\%} & \cellcolor{pink!30}\textbf{2.42$\times$} & \cellcolor{pink!30}\textbf{98.6\%} & 
\cellcolor{pink!30}\textbf{2.41$\times$} & \cellcolor{pink!30}\textbf{99.2\%} & \cellcolor{pink!30}\textbf{2.41$\times$} & \cellcolor{pink!30}\textbf{98.7\%} & \cellcolor{pink!30}\textbf{2.45$\times$} &
\cellcolor{pink!30}\textbf{98.7\%} &
\cellcolor{pink!30}\textbf{2.42$\times$}\\
\bottomrule
\end{tabular}
}
\end{table*}

\subsection{Main Results}
\noindent\textbf{Comparison with Lossless SD Methods.} While model combinations with different speed ratios $c$ greatly influence the overall acceleration, we observe three advantages of ParallelVLM over the baseline methods. 
\textbf{(i) ParallelVLM achieves significantly better acceleration across the five combinations.} As shown in Tab.~\ref{tab:lossless_comparison}, ParallelVLM achieves 2.11$\times$, 3.36$\times$ and 2.42$\times$ acceleration on average for LLaVA-OV-$0.5$B/$7$B, LLaVA-OV-$7$B/$72$B, Qwen2.5-VL-$7$B/$32$B, respectively. Notably, ParallelVLM surpasses the existing SOTA SpecVLM~\cite{Ji2025SpecVLMES} by $0.30\sim0.64\times$, mainly attributed to the co-design of the parallel pipeline and UV-Pruning. \textbf{(ii) ParallelVLM enables adaptive draft length $M$ with continuous draft stages.} Unlike vanilla SD framework that requires sequential waiting for draft and verification, ParallelVLM facilitates continuous drafting and simultaneous verification with adaptive draft length~\cite{Liu2024ParallelSD}. As shown in Tab.~\ref{tab:lossless_comparison}, ParallelVLM maintains a considerable mean accepted length ($M$) of $8.31, 6.83, 4.28$ for the three combinations. The robust acceptance reduces unsuccessful speculation and rollbacks~\cite{Shen2025SpeculativeDV}, ensuring the overall speedup performance. \textbf{(iii) ParallelVLM makes Self-SD more effective in practice.} Self-SD~\cite{Zhang2025SparsetoDenseAF} is a special paradigm with perfect alignment traded by a smaller speed ratio $c$. While vanilla SD methods wait for the draft model to finish, ParallelVLM verifies candidate tokens while drafting, greatly reducing unnecessary waiting time. As shown in the last two rows in Tab.~\ref{tab:lossless_comparison}, Self-SD achieves a mean accepted length of $M > 14$, with an acceleration of $1.51\sim1.55\times$, which accounts for $\sim0.30\times$ higher speedup than SpecVLM~\cite{Ji2025SpecVLMES} and STD~\cite{Zhang2025SparsetoDenseAF} methods.

\noindent\textbf{Comparison with Lossy Visual Token Pruning Methods.} Compared with vanilla visual token pruning methods~\cite{FastVChen2024AnII,Zhang2024SparseVLMVT,Xing2024PyramidDropAY,Tao2024DyCokeD}, ParallelVLM is favorable from the two following aspects. \textbf{(i) ParallelVLM has a theoretically lossless guarantee}. As shown in Tab.~\ref{tab:lossy_methods}, although FastV~\cite{FastVChen2024AnII}, SparseVLM~\cite{Zhang2024SparseVLMVT}, P-Drop~\cite{Xing2024PyramidDropAY}, DyCoke~\cite{Tao2024DyCokeD} can reduce video tokens without a draft-then-verify framework, they inevitably lead to performance degradation ($-9.1\%\sim-17.7\%$). Therefore, it is questionable for their practical applications, particularly in tasks demanding fine-grained details. In contrast, ParallelVLM presents little information loss with target model verification. \textbf{(ii) ParallelVLM offers significantly higher speedup.} While we also observe an alignment-speedup trade-off among visual token pruning methods in Tab.~\ref{tab:lossy_methods}, they only provide $1.44\sim1.64\times$ decoding time acceleration. However, ParallelVLM delegates rapid proposals to the 7B draft model, delivering $3.36\times$ and $2.42\times$ decoding acceleration for $72$B and $32$B models, respectively.

\subsection{Ablation Study}
\noindent\textbf{Effect of Pruning Ratio $\alpha$.} Tab.~\ref{tab:c_alpha} and Fig.\ref{fig:ablation_speedup_alpha} confirm that higher draft pruning ratios substantially increase the speed ratio in Eq. \eqref{Prefill}. As shown in Tab.~\ref{tab:c_alpha}, with the original speedup ratios of $c=\left\{5,3\right\}$ for LLaVA-OV-7B/72B and Qwen2.5-VL-7B/32B, the speed ratio increases to $c^{*}=\left\{7,4\right\}$ at $\alpha=0.5$ and $c^{*}=\left\{9,5\right\}$ at $\alpha=0.9$, respectively. In ParallelVLM,  a larger $c$ expands the optimal window size $\gamma$ we can choose within a target model verification time. As shown in Fig.~\ref{fig:ablation_speedup_alpha}, the overall speedup performance increases from $2.94\times$ to $3.36\times$ for LLaVA-OV-7B/72B and from $2.08\times$ to $2.42\times$ when $\alpha$ increases from $0.0$ to $0.9$. The slight drop in $M$ at high $\alpha$ could be offset by the gains of larger $\gamma$. However, extreme pruning ($\alpha=1.0$) breaks alignment and confirms the need for a balanced setting. We observe that $\alpha=0.9$ strikes a reasonable balance between the expanded window size and model alignment.
\begin{table}[t]
    \centering
    \caption{The speed ratio $c=T_p/T_q$ with different pruning ratio $\alpha$. The forward time is reported in \emph{ms}.}
    \label{tab:c_alpha}
    \resizebox{0.48\textwidth}{!}{%
    \begin{tabular}{lcccc}
    \toprule
    \multirow{2}{*}{Models} & \multirow{2}{*}{Target $T_p$ } & \multicolumn{3}{c}{Draft $T_q$} \\
    \cmidrule(lr){3-5}
    & & $\alpha=0.0$ & $\alpha=0.5$ & $\alpha=0.9$\\
    \toprule
    \multirow{2}{*}{\makecell[c]{LLaVA-OV\\(7B \& 72B)} }
    & \multirow{2}{*}{\makecell[c]{420 ($\pm8.64$)}} & 78.3 ($\pm3.46$)  & 57.5 ($\pm2.38$) & 46.6 ($\pm1.52$) \\ 
    & &\cellcolor{gray!20}$c=5$ & \cellcolor{cyan!10}$c=7$ & \cellcolor{pink!30}$c=9$ \\
    \midrule
    \multirow{2}{*}{\makecell[c]{Qwen2.5-VL\\(7B \& 32B)} }
    & \multirow{2}{*}{\makecell[c]{203 ($\pm6.89$)}} & 63.7 ($\pm2.42$)  & 48.2 ($\pm1.66$) & 39.8 ($\pm1.24$) \\ 
    & &\cellcolor{gray!20}$c=3$ & \cellcolor{cyan!10}$c=4$ & \cellcolor{pink!30}$c=5$ \\
    \bottomrule
    \end{tabular}
    }
\end{table}
\begin{figure}[t]
    \centering
    \includegraphics[width=0.48\textwidth]{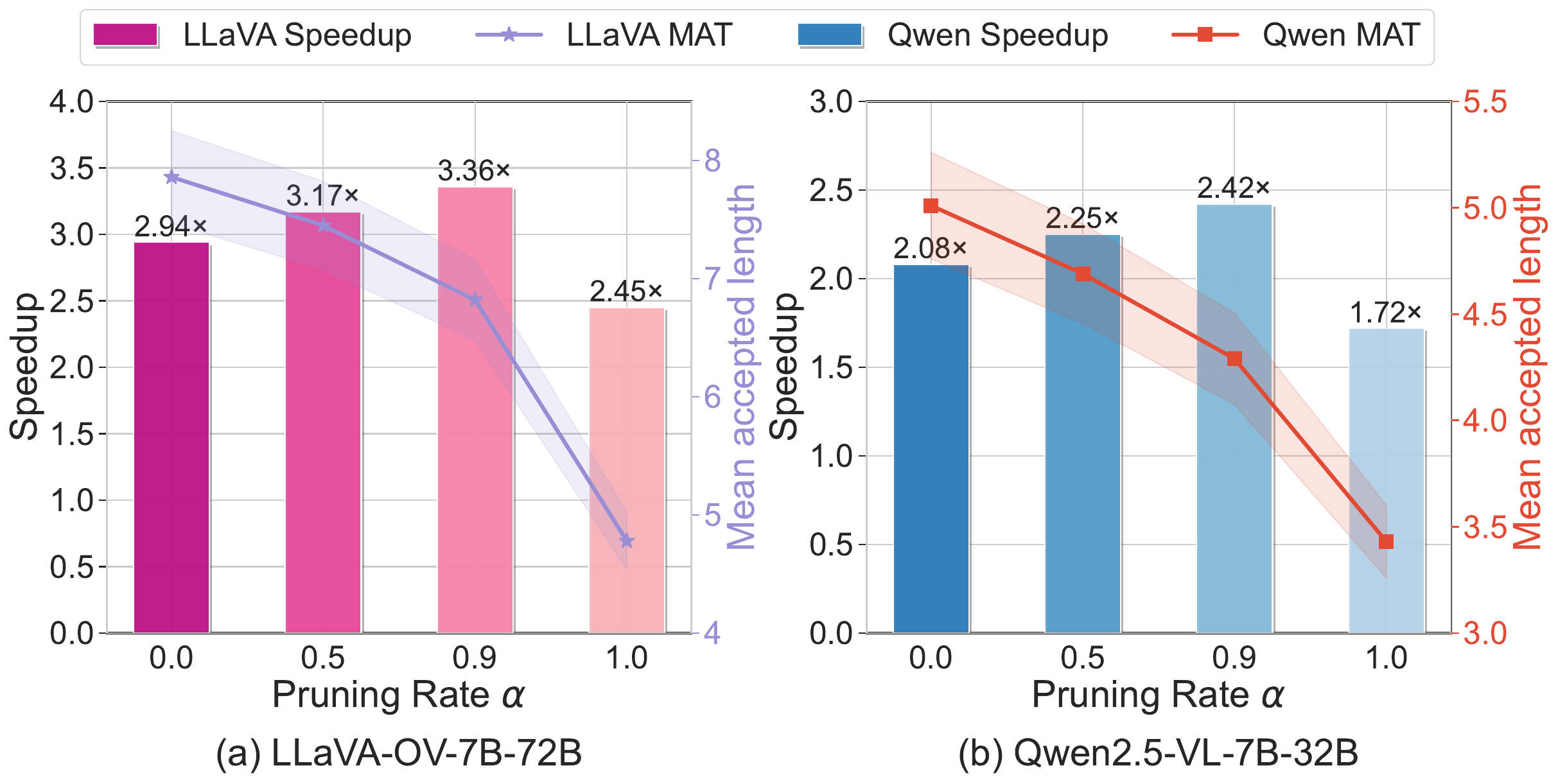}
    \caption{Ablation study of the pruning ratio $\alpha$. ParallelVLM strikes a trade-off between speedup and acceptance at $\alpha=0.9$.}
    \label{fig:ablation_speedup_alpha}
    \vspace{-0.15in}
\end{figure}

\begin{figure*}
    \centering
    \begin{subfigure}[b]{0.48\textwidth}  
        \centering
        \includegraphics[width=\textwidth]{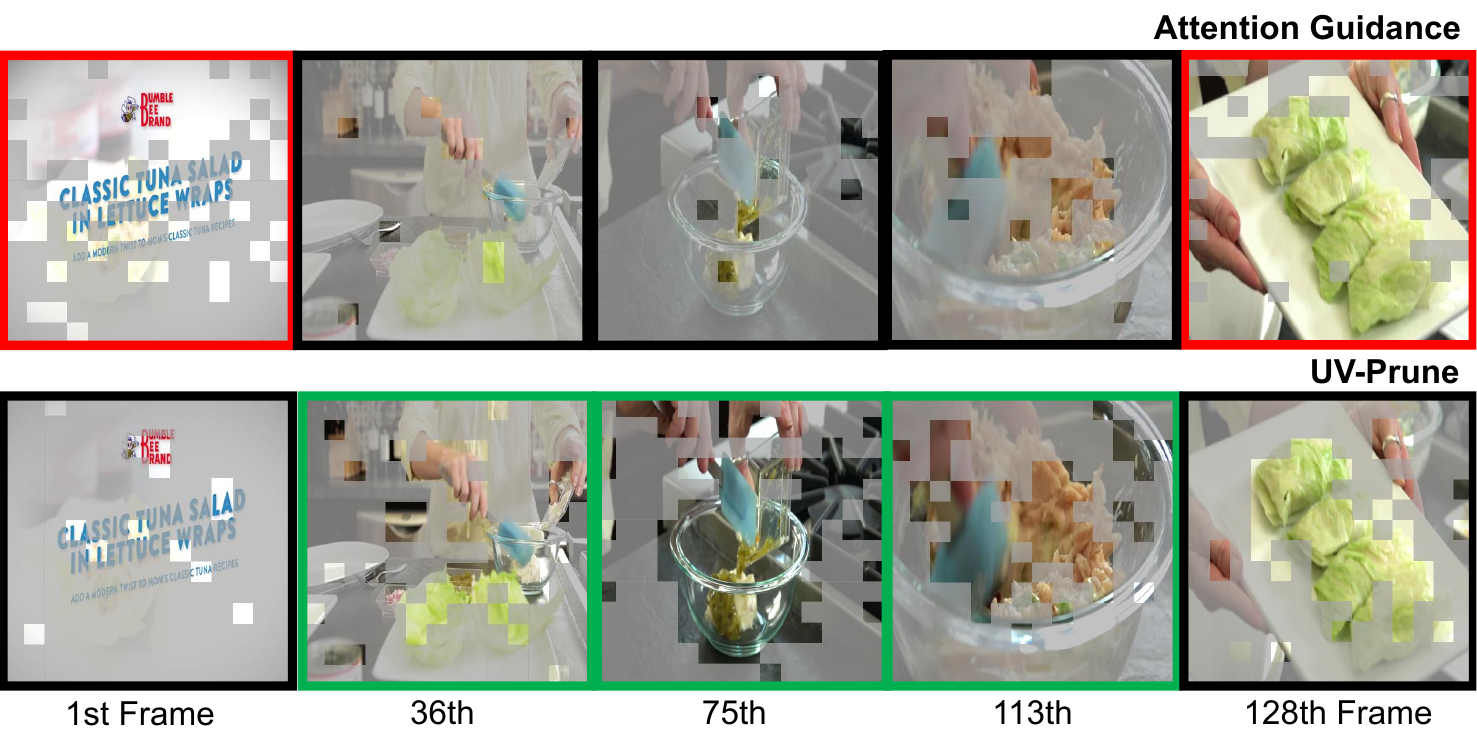}
    \end{subfigure}
    \hspace{0.02\textwidth}  
    \begin{subfigure}[b]{0.48\textwidth}  
        \centering
        \includegraphics[width=\textwidth]{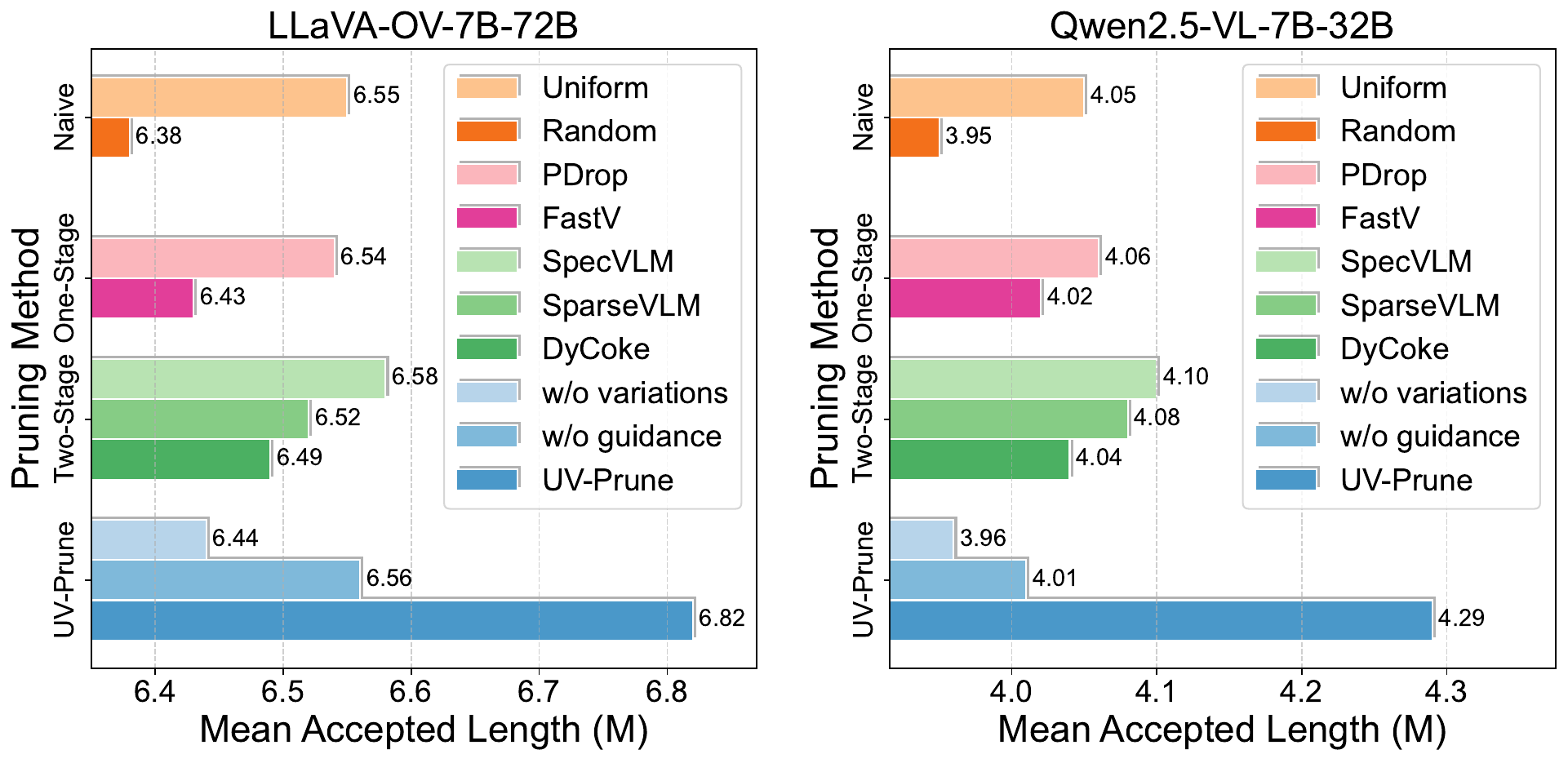}
    \end{subfigure}
    \vspace{-0.1in}
    \caption{\small{Ablation Study of UV-Prune. (a) Visualizations of selected tokens across frames. (b) Impact of UV-Prune's verifier-guidance and similarity-variation. With the joint effort of the two components, ParallelVLM achieves robust acceptance, exceeding existing baselines.}}
    \label{fig:ablation_uv_prune}
    \vspace{-0.2in}
\end{figure*}

\noindent\textbf{Impact of Unbiased Verifier-Guided Pruning.} As shown in Fig.~\ref{fig:ablation_uv_prune} (a), UV-Prune effectively mitigates the positional bias of attention-guided pruning meanwhile selectively focusing on different visual cues across frames. In Fig.~\ref{fig:ablation_uv_prune} (b), we further evaluate the alignment of different pruning methods and report the mean accepted length $M$ when $\alpha=0.9$. For comparison, although SpecVLM~\cite{Ji2025SpecVLMES} leverages the target model's attention scores for guidance, it provides only partial accurate alignment due to positional bias and $M$ drops by $0.2$. For other methods without verifier guidance, the accept length drops more sharply by $0.3\sim0.5$. In contrast, UV-Prune consistently achieves a higher accepted length.

\noindent\textbf{FlashAttention Compatibility.} Because UV‑Prune does not rely on explicit attention‑score computation, it is fully compatible with FlashAttention~\cite{Dao2023FlashAttention2FA}. Tab.~\ref{tab:flash_attention} demonstrates increasing benefits with longer sequence lengths. At $24$K sequence length, FlashAttention adds $1.35\times$ extra speedup on top of ParallelVLM and at $32$K, the gain further reaches to $1.41\times$. With an increasing demand for long video understanding tasks, the compatibility with efficient attention operators is increasingly important in the future.

\begin{table}[h]
    \centering
    \caption{Integration of flash attention at different sequence lengths. The forward time is reported for LLaVA-OV-72B in \emph{ms}.}
    \label{tab:flash_attention}
    \resizebox{0.48\textwidth}{!}{%
    \small 
    \begin{tabular}{lcccc}
    \toprule
    & \multicolumn{4}{c}{Sequence Length} \\
    \cmidrule(lr){2-5}
    & 8K & 16K & 24K & 32K \\
    \toprule
    w/o flash. & 302 ($\pm 6.15$) & 361 ($\pm 7.06$) & 420 ($\pm 8.32$) & 479 ($\pm 9.14$) \\
    w/ flash.  & 248 ($\pm 4.42$) & 280 ($\pm 4.67$) &  311 ($\pm 5.21$)& 340 ($\pm 6.08$)  \\
    Speedup    & $1.22\times$ & $1.29\times$ & $1.35\times$ & $1.41\times$ \\
    \bottomrule
    \end{tabular}
    }
\end{table}

%% file: sec/6_related_work.tex
\vspace{-0.15in}
\section{Related Work}
\label{sec:related_work}
\textbf{Visual Token Pruning for VLMs.} A plethora of works have identified substantial redundancy in visual tokens and proposed pruning strategies to speed up VLMs. Existing approaches can generally be grouped into pre-LLM~\cite{Alvar2025DivPruneDV,Shang2024LLaVAPruMergeAT, Huang2024PruneVidVT, Shen2025FastVIDDD,wen2025stop,liu2025video,Tang2025AdaptiveKS, yang2025visionzip} and intra-LLM pruning~\cite{FastVChen2024AnII,Zhang2024SparseVLMVT,Xing2024PyramidDropAY,Tao2024DyCokeD}. The pre-LLM methods (e.g. PruMerge~\cite{Shang2024LLaVAPruMergeAT} and FastVID~\cite{Shen2025FastVIDDD}) remove redundant tokens at the vision encoder stage before they are passed to the language model, leveraging fixed similarity or clustering criteria. Intra‑LLM methods (e.g. FastV~\cite{FastVChen2024AnII}, SparseVLM~\cite{Zhang2024SparseVLMVT}, PDrop~\cite{Xing2024PyramidDropAY}, DyCoke~\cite{Tao2024DyCokeD}) dynamically prune or sparsify the KV cache during the LLM’s forward computation. Although visual token pruning methods reduce token load in long-video scenarios, they offer limited acceleration and cause inevitable distributional shifts that lead to performance degradation.

\noindent\textbf{Speculative Decoding for LLMs and VLMs.} SD utilizes a lightweight draft model to draft candidate tokens and verify them simultaneously~\cite{Leviathan2022FastIF, Xia2022SpeculativeDE}, thereby reducing the autoregressive latency of the target model. It can be categorized into draft model training-based~\cite{cai2024medusa, du2024glide,Li2024EAGLESS,Li2024EAGLE2FI,Li2025EAGLE3SU} and training-free methods~\cite{fu2024break,Zhao2024OuroborosGL,Chen2023AcceleratingLL}. Subsequent works~\cite{Liu2024ParallelSD, Shen2025SpeculativeDV, shen2026double} further explore parallel designs to eliminate mutual waiting. When extended to VLMs, the draft model has been instantiated as a text-only model~\cite{Gagrani2024OnSD}, a distilled model~\cite{Huang2025SpecVLMFS}, or a lightweight ``twig'' model~\cite{Shao2025GrowingAT}. However, naive adaptations to VLM often overlook the fact that draft latency can grow significantly in long-context video understanding, where large amounts of visual tokens must be processed. As a result, current approaches are often ineffective~\cite{Gagrani2024OnSD} or inefficient~\cite{Huang2025SpecVLMFS, Shao2025GrowingAT} for large-scale video inputs.

\noindent\textbf{Speculative Decoding for Video-LLMs.} The long-context video prefilling contradicts the principle of a lightweight draft model in speculative decoding. To address this challenge, Sparse-to-Dense (STD)~\cite{Zhang2025SparsetoDenseAF} introduces a Self-SD framework and utilizes sparse KV cache selection of the dense target model to serve as the draft model. The closest work to ours is SpecVLM~\cite{Ji2025SpecVLMES}, which incorporates SD with attention-guided token pruning from the target model. In contrast to standard SD with sequential dependency, we enables complete parallelism while making video token pruning alignment-aware, unbiased and fully compatible with modern hardware optimizations like FlashAttention~\cite{Dao2023FlashAttention2FA}.

%% file: sec/7_conclusion.tex
\vspace{-0.05in}
\section{Conclusion}

In this paper, we propose ParallelVLM to remove autoregressive decoding bottlenecks in Video-LLMs caused by massive video token inputs. It combines parallel prefilling and parallel decoding with UV-Prune, an attention-free, verifier-guided token reduction method using vision-text similarity variation. By co-designing with parallel SD, our framework improves the speedup ratio while preserving lossless generation, accelerating LLaVA-OV-72B and Qwen2.5-VL-32B by up to $3.36\times$ and $2.42\times$, respectively.

\section*{Acknowledgements}
This work is supported by National Science Foundation of China under grants 62576310, 62394341 and Zhejiang Provincial National Science Foundation of China under Grant No. LZ25F020007.

%% file: sec/X_suppl.tex
\clearpage
\setcounter{page}{1}
\maketitlesupplementary

\renewcommand{\thesection}{\Alph{section}}
\setcounter{section}{0}

For readers who are not familiar with speculative decoding and parallel decoding, we provide more detailed explanations in the appendix. The appendix is organized as follows: experimental details are presented in Sec.~\ref{suppl:impl}, parallel decoding in Sec.~\ref{suppl:pearl}, and theoretical analysis of the proposed method in Sec.~\ref{suppl:theory}. Additionally, we elaborate on the origin of the unbias nature of UV-Prune in Sec.~\ref{suppl:uv-prune}. Finally, we include several illustrative case studies in Sec.~\ref{suppl:case}.

\vspace{-0.01in}
\section{Experimental Details}
\label{suppl:impl}
\noindent\textbf{Tasks and Metrics.} Since speculative decoding~\cite{Leviathan2022FastIF,Xia2022SpeculativeDE} focuses on accelerating long-paragraph decoding, we select five video understanding and description datasets VideoDetailCaption~\cite{VideoDetailCaption}, VideoMME~\cite{Fu2024VideoMMETF}, MVBench~\cite{Li2023MVBenchAC}, MVLU~\cite{Zhou2024MLVUAC}, LongVideoBench~\cite{Wu2024LongVideoBenchAB} with rich semantic information. We filter out videos shorter than one minute and randomly sample 50 clips from each dataset for evaluation. Unlike traditional Visual Question Answering (VQA) formats \footnote{Different from our finer-grained long-paragraph decoding setting, VQA evaluation suites like LMMs-Eval mainly focus on reality checking.}, we employ prompt engineering to generate finer-grained descriptions. For the lossy visual token pruning method, due to the lack of long, detailed ground truth description labels in the community, we save the autoregressive output texts after pruning and use the target model with full context to perform token-wise speculative sampling~\cite{Leviathan2022FastIF} to measure the acceptance rate $\mathcal{A}$. This approach not only saves the cost of manual annotation, but also accurately reflects the distribution shifts of the pruned models.

\noindent\textbf{Model Combinations.} Regarding the model combinations, since ParallelVLM is a training-free method, aligning the draft model with the target model poses a challenge. We therefore select model combinations with different parameter sizes from the same series to ensure that the draft and target models have a basic alignment capability. In experiments we find that models from the LLaVA-OV~\cite{Li2024LLaVAOneVisionEV} series exhibit a higher alignment ($M=8.31$ for 0.5B/7B and $M=6.83$ for 7B/72B), while Qwen2.5-VL~\cite{Bai2025Qwen25VLTR} shows a relatively lower alignment ($M=4.28$ for 7B/32B). Additionally, the 0.5B/7B combination within the LLaVA-OV series achieves better alignment than the 7B/72B combination, as the former has more similar model capacities and fine-tuning data scales. Although our approach is totally training-free, draft training-based methods~\cite{cai2024medusa,Li2024EAGLESS,Li2025EAGLE3SU} can be seamlessly integrated to achieve improved alignment and higher acceleration ratios in future work.

\noindent\textbf{Window Size $\gamma$ Selection.} For simplicity of explanation in the main text, we directly rounded the speed ratio between the draft model and the target model $c=T_p/T_q$ to an integer value. In practice, their acceleration ratio is not strictly an integer. In Tab.~\ref{tab:c}, we present the exact acceleration ratios for all the five combinations. Rounding the window size up or down relative to the speed ratio $c$ are both reasonable options, the superiority between which depends on the relative magnitudes of the window size $\gamma$ and the mean accepted length $M$~\cite{Liu2024ParallelSD}. For combinations of LLaVA-OV-7B/72B and Qwen2.5-VL-7B/32B with aggressive window sizes and relatively conservative accepted lengths, rounding down to reduce mutual waiting is a wiser choice. For Self-SD with narrow window sizes and high accepted lengths, rounding up is a more sensible option.

\begin{table}[t]
\caption{Sensitivity of speed ratio $c=T_p/T_q$ with the increase of pruning ratio $\alpha$. The forward time is reported in \emph{ms}.}
\label{tab:c}
\centering
\resizebox{\linewidth}{!}{%
\begin{tabular}{lcccc}
\toprule
\multirow{2}{*}{Models} & \multirow{2}{*}{Target $T_p$ } & \multicolumn{3}{c}{Draft $T_q$} \\
\cmidrule(lr){3-5}
& & $\alpha=0.0$ & $\alpha=0.5$ & $\alpha=0.9$\\
\toprule
\multirow{2}{*}{\makecell[c]{LLaVA-OV\\(0.5B \& 7B)} }
& \multirow{2}{*}{\makecell[c]{78.3 ($\pm3.46$) }} & 46.1 ($\pm1.21$)  & 32.5 ($\pm0.89$) & 25.8 ($\pm0.32$) \\ 
& &\cellcolor{gray!20}$c=1.7$ & \cellcolor{cyan!10}$c=2.4$ & \cellcolor{pink!30}$c=3.0$ \\
\midrule
\multirow{2}{*}{\makecell[c]{LLaVA-OV\\(7B \& 72B)} }
& \multirow{2}{*}{\makecell[c]{420 ($\pm8.64$)}} & 78.3 ($\pm3.46$)  & 57.5 ($\pm2.38$) & 46.6 ($\pm1.52$) \\ 
& &\cellcolor{gray!20}$c=5.3$ & \cellcolor{cyan!10}$c=7.3$ & \cellcolor{pink!30}$c=9.0$ \\
\midrule
\multirow{2}{*}{\makecell[c]{Qwen2.5-VL\\(7B \& 32B)} }
& \multirow{2}{*}{\makecell[c]{203 ($\pm6.89$)}} & 63.7 ($\pm2.42$)  & 48.2 ($\pm1.66$) & 39.8 ($\pm1.24$) \\ 
& &\cellcolor{gray!20}$c=3.2$ & \cellcolor{cyan!10}$c=4.2$ & \cellcolor{pink!30}$c=5.1$ \\
\midrule
\multirow{2}{*}{\makecell[c]{LLaVA-OV\\(7B \& 7B)} }
& \multirow{2}{*}{\makecell[c]{78.3 ($\pm3.46$)}} & 78.3 ($\pm3.46$)  & 57.5 ($\pm2.38$) & 46.6 ($\pm1.52$) \\ 
& &\cellcolor{gray!20}$c=1.0$ & \cellcolor{cyan!10}$c=1.4$ & \cellcolor{pink!30}$c=1.7$ \\
\midrule
\multirow{2}{*}{\makecell[c]{Qwen2.5-VL\\(7B \& 7B)} }
& \multirow{2}{*}{\makecell[c]{63.7 ($\pm2.42$)}} & 63.7 ($\pm2.42$)  & 48.2 ($\pm1.66$) & 39.8 ($\pm1.24$) \\ 
& &\cellcolor{gray!20}$c=1.0$ & \cellcolor{cyan!10}$c=1.3$ & \cellcolor{pink!30}$c=1.6$ \\
\bottomrule
\end{tabular}
}
\end{table}

\begin{table}[t]
\centering
\caption{Peak memory usage (GB) and latency breakdown of LLaVA-OV-7B/72B (s). The output length is set to 512.}
\label{tab:flash}
\small 
\begin{tabular}{lccc}
\toprule
\multirow{2}{*}{\makecell[c]{Operation}} & \multicolumn{3}{c}{Method} \\
\cmidrule(lr){2-4}
 & Autoregressive & SD & ParallelVLM \\
\midrule
Peak Memory & 198.85 & 225.19 & 225.34 \\
\midrule
Draft Prefill & 7.92 & 7.92 & / \\
Target Prefill & 44.23 & 44.23 & 44.23 \\
Draft Decode & / & 58.14 & / \\
Target Decode & 228.14 & 54.85 & 67.85 \\
\midrule
Token/s & 2.25 & 4.53 & 7.55 \\
\midrule
Total Latency & 280.29 & 165.15 & 112.08 \\
\bottomrule
\end{tabular}
\vspace{-0.10in}
\end{table}

\begin{figure*}
    \centering
\includegraphics[width=1.0\linewidth]{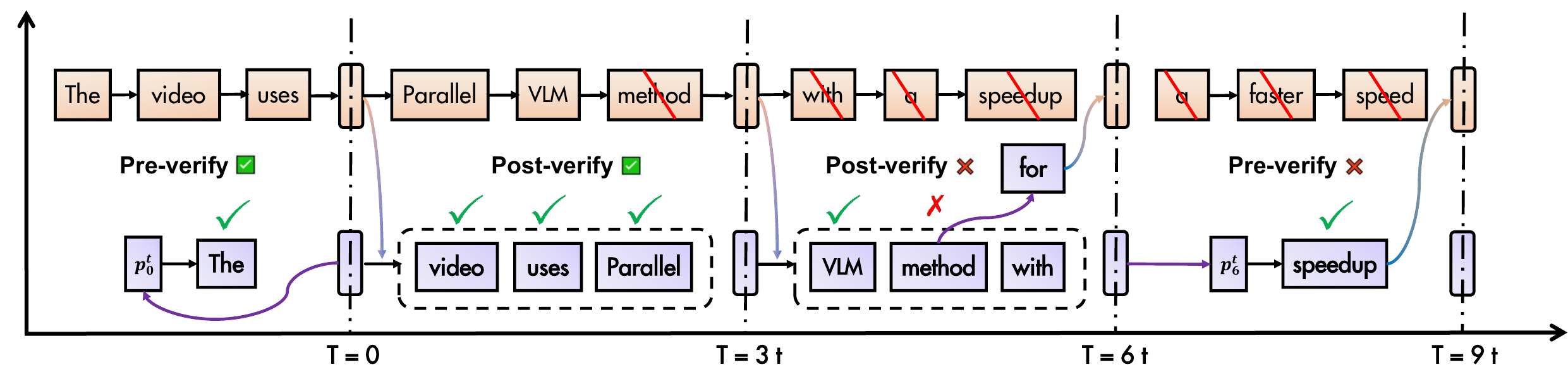}
\vspace{-0.2in}
    \caption{\small{An illustrative example of decoding ``\texttt{The video uses ParallelVLM for speedup}" is provided to demonstrate the verification mode switching of the parallel pipeline, as well as its advancement and rollback mechanisms. Here $t$ denotes the one single forward time of the draft model and the window size $\gamma$ is set to 3.}}
    \label{fig:para_sd_case}
\vspace{-0.15in}
\end{figure*}

\noindent\textbf{Memory Consumption and Latency Breakdown.} We report the peak memory usage and latency breakdown in Tab.~\ref{tab:flash}. Although performing the draft/target forward functions simultaneously, ParallelVLM incurs no more memory footprint beyond vanilla SD with draft model pruning ($+26$ GB than autoregressive). Furthermore, ParallelVLM directly saves the draft model’s prefilling time with parallel prefilling ($-7.92$ s), and significantly accelerates the decoding speed through the visual alignment aware parallelism. Our approach achieves a notable $7.55$ token/s throughput, which is $3.36\times$ faster than autoregressive decoding. While the throughput metric may vary on different devices, the speedup ratio will persist for methodological superiority.

\noindent\textbf{Tree-Attention Integration.} Applying a draft tree structure to speculative decoding can help improve the accepted length, but it is not feasible under a parallel architecture. The draft tree generates KV caches for multiple candidates, which constitutes a batch of additional overhead. This overhead grows exponentially with the depth $h$  of the tree during parallelization (even sparse trees~\cite{Li2024EAGLESS} exhibit considerable growth). In vanilla speculative decoding, the depth of the draft tree is generally equal to the window size $h=\gamma$. However, for parallel decoding, the depth of the draft tree depends on the adaptive draft length~\cite{Liu2024ParallelSD} and may increase $h$ by multiple times. To address the limitation of applying the tree structure to parallel decoding, SpecBranch~\cite{Shen2025SpeculativeDV} attempts to achieve a balance with a sparser ``branch prediction" structure, which we anticipate will find broader applications in the future.

\section{Parallel Pipeline Details}
\label{suppl:pearl}

For clarity, we collectively denote the prefix of visual-text tokens $V_{1:m},X_{1:n}$ as $\mathbf{x}$ here to describe the pipeline advancement and rollback mechanisms of parallel decoding.

As illustrated in Algorithm.~\ref{alg:speculative_decoding}, Parallel SD~\cite{Liu2024ParallelSD} sets two verification modes for the target model: (i) Pre-verify, which verifies the first draft token in advance, and (ii) Post-verify, which performs batch verification meanwhile generating new draft tokens. We elaborate on more details in the following with the illustrative example in Fig.~\ref{fig:para_sd_case} of decoding the sentence ``\texttt{The video uses ParallelVLM for speedup}" with the window size $\gamma=3$.

\begin{algorithm}[H]
\caption{Parallel Decoding Pipeline.}
\label{alg:speculative_decoding}
\begin{algorithmic}[1]             
\Require draft model $M_q$, target model $M_p$, input prefix $\mathbf{x}$, output length $K$, window size $\gamma$.
\State $\text{mode} \gets \text{``Pre-verify''}$   
\While{$\text{len}(\mathbf{x}) < K$}
    \If{$\text{mode} = \text{``Pre-verify''}$}
        \State $(\mathbf{x}, \text{mode}) \gets \text{Pre-verify}(M_q, M_p, \mathbf{x}, \gamma)$
    \Else
        \State $(\mathbf{x}, \text{mode}) \gets \text{Post-verify}(M_q, M_p, \mathbf{x}, \gamma)$
    \EndIf
\EndWhile
\end{algorithmic}
\end{algorithm}
\vspace{-0.15in}

At $T=0$, both the draft model and the target model complete pre-filling, they output a bonus token and execute the initial Pre-verify. The Pre-verify verifies the first token ``\texttt{The}" before the verification stage. Upon successful matching (accept ``\texttt{The}"), the mode switches to Post-verify. From $T=0\sim3t$, a window size of draft tokens ``\texttt{video uses Parallel}" is waiting to be verified. The Post-verify checks them with batch parallelism, accepts them all, and maintains the Post-verify mode. From $T=3t\sim6t$, another window size of draft tokens ``\texttt{VLM method with}" is waiting to be verified. However, ``\texttt{method}" is rejected by the target model, so a new token ``\texttt{for}" is resampled from the target's distribution. The KV cache of the draft model rollbacks to ``\texttt{VLM}" and starts from ``\texttt{for}" to draft new tokens, meanwhile the mode is reset back to Pre-verify. From $T=6t\sim9t$, since there are no draft tokens left, the target model verifies the first draft token ``\texttt{a}". Unfortunately, the target model rejects it and thinks that the resampled ``\texttt{speedup}" is a better choice. The KV cache of the draft model thus rollbacks to ``\texttt{for}" and restarts from ``\texttt{speedup}" to continue its drafting. The whole pipeline from $T=0\sim9t$ thus outputs ``\texttt{The video uses ParallelVLM for speedup}" within only three target model verification times. The maximum adaptive draft length~\cite{Liu2024ParallelSD} here is 5, as ``\texttt{The video uses ParallelVLM}" is continuous without interruption of the first two draft stages.

\section{Theoretical Analysis}
\label{suppl:theory}
Let $T_q=t$ be the single forward time of the draft model and $T_p=c\cdot t$ be the verification time of the target model. For a predefined hyperparameter $\gamma$ to control window size, we derive the ideal and practical speedup ratio by calculating the per-token time, respectively. 

\begin{theorem}[Vanilla SD (Ideal)]\label{thm:sd} 
Under full acceptance of the $\gamma$ draft tokens, the per-token time of vanilla SD is:
\begin{eqnarray}
\small
T_{SD}=\frac{\gamma\cdot T_q+T_p}{\gamma+1}=\frac{\gamma+c}{\gamma+1}\cdot t.
\label{sd}
\end{eqnarray}
\end{theorem}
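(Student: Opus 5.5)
The argument is a one-cycle accounting: compute the wall-clock time of one draft-then-verify round and the number of tokens it emits. Per-token time is cycle time divided by tokens produced. The claimed formula $T_{SD}=\frac{\gamma T_q+T_p}{\gamma+1}$ then follows by substitution.

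\textbf{Cycle time.} In vanilla SD the two phases are strictly sequential, as described in Sec.~\ref{subsec:pre_sd} and Challenge 1 of Sec.~\ref{sec:motivation}. The draft model $M_q$ makes $\gamma$ autoregressive forward passes, one per candidate token $\hat X_{k+1:k+\gamma}$, which costs $\gamma T_q$. Only after this does the target $M_p$ run a single parallel forward pass over the $\gamma$ candidates, costing $T_p$. I will take $T_p$ to be independent of $\gamma$ in the regime considered, consistent with the paper's treatment of $T_p$ as one target forward. The cycle time is therefore $\gamma T_q+T_p$.

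\textbf{Tokens per cycle.} Under the full-acceptance hypothesis, every one of the $\gamma$ draft tokens passes the rejection test $\min\{1,p/q\}$. The target's verification pass also yields logits at position $k+\gamma+1$. Sampling from these gives one bonus token at no extra cost, so each cycle commits exactly $\gamma+1$ tokens.

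\textbf{Per-token time.} Dividing, and using $T_q=t$, $T_p=ct$,
\begin{equation*}
T_{SD}=\frac{\gamma T_q+T_p}{\gamma+1}=\frac{\gamma t+ct}{\gamma+1}=\frac{\gamma+c}{\gamma+1}\,t .
\end{equation*}
This is independent of the cycle index, so it is also the asymptotic per-token time over a long generation. Any one-time prefilling cost is amortized away and excluded from the decoding analysis.

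\textbf{Main obstacle.} The only subtle point is justifying the bonus token, i.e.\ why the count is $\gamma+1$ rather than $\gamma$. I would argue this from the standard speculative sampling procedure of Leviathan et al. and Chen et al.: when all drafts are accepted, the target's distribution at the next position is already computed in the same pass and can be sampled directly. I would also state explicitly that the model ignores communication and sampling overheads and any dependence of $T_p$ on $\gamma$. With those idealizations stated, the result is exact.
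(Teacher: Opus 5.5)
Your proposal is correct and matches the paper's argument, which the paper leaves implicit by stating the formula with no written proof: one sequential draft-then-verify cycle costs $\gamma T_q + T_p$ and, under full acceptance, commits $\gamma+1$ tokens (the $\gamma$ accepted drafts plus the target's bonus token), so dividing and substituting $T_q=t$, $T_p=ct$ gives $\frac{\gamma+c}{\gamma+1}\,t$. Your explicit idealizations (no communication or sampling overhead, $T_p$ independent of $\gamma$, prefilling excluded) are assumptions the paper uses tacitly, so stating them is a useful clarification rather than a departure.
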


\noindent Thus, the ideal speedup ratio is $\mathcal{V}_{SD}=\frac{T_p}{T_{SD}}=c\cdot\frac{\gamma+1}{\gamma+c}$. 

\begin{theorem}[Parallel SD (Ideal)]
\label{thm:para_sd} Under full acceptance of the $\gamma$ draft tokens, the per-token time of parallel SD is:
\begin{eqnarray}
\small
T_{PSD}=\frac{\rm{max}(\gamma t, c t)}{\gamma}=\begin{cases} t, \ \ \ \quad\gamma\ge c\\ \frac{c}{\gamma}t,\quad \gamma<c\end{cases}
\label{para_sd}
\end{eqnarray}
\end{theorem}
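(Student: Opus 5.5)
We prove the statement by a steady-state timing argument. We analyze the Parallel Decoding pipeline of Algorithm~\ref{alg:speculative_decoding} under the idealized assumption that every draft token is accepted. The approach has two parts: first compute the wall-clock cost of one pipeline round, then count how many tokens each round commits.

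Set $T_q=t$ and $T_p=c\,t$, where $c=T_p/T_q$. In Post-verify mode, the two processes run concurrently in each round.
\begin{itemize}
\item The draft process produces the next window of $\gamma$ tokens. This takes $\gamma T_q=\gamma t$ sequential forward passes.
\item The target process verifies the previous window in a single batched forward pass. This takes $T_p=ct$, independent of $\gamma$, since the $\gamma$ tokens are scored in parallel.
\end{itemize}
A round cannot end until both processes finish. The next verification needs the new window, and the next drafting round needs the verification outcome, because rollback on rejection must be possible. So the round length is $\max(\gamma t, ct)$.

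Next we count tokens. Under full acceptance the pipeline stays in Post-verify mode after the first round, by the mode-switching rule described for Fig.~\ref{fig:para_sd_case}. Each round then commits exactly the $\gamma$ verified tokens.

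There is one subtlety relative to Theorem~\ref{thm:sd}. In vanilla SD the bonus token sampled from the target adds a $+1$ per round. Here the bonus token of a fully accepted window is not an extra committed token. The draft has already speculated past that position, and the bonus serves only as a consistency check on the next draft token. I would argue this carefully from the pipeline description: the next window's first token occupies that slot, so counting the bonus as well would double-count it.

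The startup cost is negligible in the limit of many rounds. The first window is drafted during Parallel Prefilling, and the initial Pre-verify is a one-time cost. Hence the per-token time is
\[
T_{PSD}=\frac{\max(\gamma t, ct)}{\gamma}.
\]
Splitting into the cases $\gamma\ge c$ and $\gamma<c$ gives the two branches $t$ and $\tfrac{c}{\gamma}t$.

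The main obstacle is making the no-mutual-waiting claim rigorous. We must show that in steady state neither process idles beyond the $\max$, and that the pre-rollback machinery of \cite{Liu2024ParallelSD} introduces no extra serialization when all tokens are accepted. I would handle this by induction on rounds. The invariant is that at the start of round $i$, window $i-1$ is fully drafted and window $i-2$ is fully verified. This invariant justifies charging exactly $\max(\gamma t,ct)$ per round. It also clarifies why $\gamma=c$ is optimal: this choice balances the two branches of the $\max$, gives $T_{PSD}=t$, and yields the speedup $T_p/T_{PSD}=c$, which is the form later used in Theorem~\ref{thm:paravlm_speedup}.
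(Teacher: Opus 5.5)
Your proposal is correct and follows the same route as the paper. The paper gives no separate proof: the formula itself charges each Post-verify round $\max(\gamma t, ct)$ of wall-clock time, because drafting and verification run concurrently, and credits the round with $\gamma$ committed tokens under full acceptance. Your remarks on the bonus token, startup amortization, and the round invariant justify steps the paper leaves implicit. One small correction to your closing remark: in this ideal model every $\gamma\ge c$ gives $T_{PSD}=t$, so $\gamma=c$ is optimal only in the sense of being the smallest window that achieves it.
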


\noindent When $\gamma=c$, the ideal speedup ratio is $\mathcal{V}_{PSD}=\frac{T_p}{T_{PSD}}=c$, representing $c\times$ speedup versus autoregressive decoding and ${\mathcal{V}_{PSD}}/{\mathcal{V}_{SD}}=\frac{\gamma+c}{\gamma+1}=\frac{2c}{c+1}\rightarrow2$$\times$ $(c\gg1)$ speedup compared to vanilla SD. Note that $\mathcal{V}_{PSD}$ can enjoy a higher gain of $c^*>c$ with our draft model pruning setting. 

\begin{theorem}[Vanilla SD (Practical)]\label{thm:sd_practical} 
Under partial acceptance rate $\tau\ (0<\tau<1)$ of $\gamma$ draft tokens, the per-token time of vanilla SD is:
\begin{eqnarray}
\small
T_{SD}=\frac{\gamma\cdot T_q+T_p}{\tau\cdot\gamma+1}=\frac{\gamma+c}{\tau\cdot\gamma+1}\cdot t.
\label{sd_practical}
\end{eqnarray}
\end{theorem}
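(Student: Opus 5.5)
The statement just above is Theorem~\ref{thm:sd_practical}: vanilla SD under partial acceptance $\tau$ has per-token time $T_{SD}=\frac{\gamma T_q+T_p}{\tau\gamma+1}$. The proof will be a per-cycle accounting argument, run exactly like the ideal case in Theorem~\ref{thm:sd}. The only change is that the number of tokens committed per cycle is counted in expectation, using $\tau$.

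\textbf{Step 1: cost of one cycle.} Vanilla SD alternates strictly between drafting and verifying. Each cycle is $\gamma$ sequential draft forwards followed by one parallel target verification pass. No overlap occurs, so the wall-clock cost of a cycle is $\gamma T_q+T_p=(\gamma+c)t$. This holds whether or not the drafts are accepted: the window is drafted in full before verification starts.

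\textbf{Step 2: tokens produced per cycle.} I take the paper's convention that $\tau$ is the average fraction of the $\gamma$ drafted tokens accepted in a cycle, so on average $\tau\gamma$ drafts are accepted. On top of these, verification always yields exactly one extra token. If some draft is rejected, this is the token resampled from $\mathrm{norm}(\max(0,p-q))$ (Sec.~\ref{subsec:pre_sd}). If all drafts are accepted, it is the bonus token sampled from $p$ at position $\gamma+1$. Hence the expected number of tokens emitted per cycle is $\tau\gamma+1$. This matches Theorem~\ref{thm:sd} at $\tau=1$.

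\textbf{Step 3: per-token time.} Over $N$ cycles, total time is $N(\gamma+c)t$ and the total token count is concentrated around $N(\tau\gamma+1)$. Their ratio, by a renewal-reward or law-of-large-numbers argument, gives $T_{SD}=\frac{\gamma+c}{\tau\gamma+1}\,t$. The speedup is then $\mathcal{V}=T_p/T_{SD}=c\,\frac{\tau\gamma+1}{\gamma+c}$.

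\textbf{Main obstacle.} The delicate point is the meaning of $\tau$. Under the classical i.i.d.\ model with per-token acceptance probability $\beta$, the expected accepted count is $\frac{\beta(1-\beta^{\gamma})}{1-\beta}$, which is not linear in $\gamma$. I will therefore state explicitly that $\tau$ is \emph{defined} as the expected accepted fraction of the window, which makes Step~2 exact. I will also note that the per-token time is a ratio of expectations, justified by the renewal-reward theorem because cycles are i.i.d.\ in cost (deterministic) and in yield (bounded).
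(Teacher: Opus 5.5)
Your per-cycle accounting (cost $\gamma T_q+T_p=(\gamma+c)t$ per draft-then-verify round, with $\tau\gamma+1$ tokens committed on average, counting the accepted drafts plus the one resampled or bonus token) is exactly the argument the paper uses; the paper gives no separate proof and obtains the formula from Theorem~\ref{thm:sd} by replacing the $\gamma+1$ committed tokens with $\tau\gamma+1$. Your explicit convention that $\tau$ is the expected accepted fraction of the window, rather than an i.i.d.\ per-token acceptance probability (which would give the truncated-geometric count of Lemma~1), is what makes the formula exact, and the renewal-reward step is harmless extra rigor.
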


\noindent Thus, the ideal speedup ratio is $\mathcal{V}_{SD}=\frac{T_p}{T_{SD}}=c\cdot\frac{\tau\cdot\gamma+1}{\gamma+c}$, which increases monotonically with the acceptance rate $\tau$.

\begin{theorem}[Parallel SD (Practical)]
\label{thm:para_sd_practical}Under partial acceptance rate $\tau\ (0<\tau<1)$ of $\gamma$ draft tokens, the per-token time of parallel SD is:
\begin{eqnarray}
\small
T_{PSD}=\frac{\mathrm{max}(\gamma t, c t)}{\tau\cdot\gamma}=\begin{cases} \frac{1}{\tau}t, \quad\gamma\ge c\\ \frac{c}{\tau\cdot\gamma}t,\quad \gamma<c\end{cases}
\label{para_sd_practical}
\end{eqnarray}
\end{theorem}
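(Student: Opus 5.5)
We prove Theorem~\ref{thm:para_sd_practical} the same way as Theorem~\ref{thm:para_sd}: compute the wall-clock length of one steady-state decoding round of the parallel pipeline and divide by the expected number of tokens committed in that round. We work in the same cost model as before: each draft forward costs $T_q=t$, each target verification costs $T_p=ct$ independent of the number of tokens verified in the batch, and each of the $\gamma$ drafted tokens is accepted on average with rate $\tau$. Thus a window yields $\tau\gamma$ accepted tokens in expectation, with the correction or bonus token absorbed into this average, as in Theorem~\ref{thm:sd_practical}.

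\textbf{Step 1 (round length).} In a Post-verify round the draft process produces the next window of $\gamma$ tokens, which takes $\gamma t$. Concurrently, the target process verifies the previous window, which takes $ct$. The two processes start together and synchronize only at the end of the round, so the round lasts $\max(\gamma t, ct)$. This is exactly the numerator of Theorem~\ref{thm:para_sd}, and partial acceptance does not change it. A rejection triggers a pre-rollback and a switch to Pre-verify (Sec.~\ref{suppl:pearl}), and that round also costs one target forward overlapped with draft work. We therefore bound every round by $\max(\gamma t, ct)$ and treat the rollback overhead as already accounted for through $\tau$.

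\textbf{Step 2 (tokens per round).} By linearity of expectation over the $\gamma$ positions, each with average acceptance $\tau$, the expected number of committed tokens per round is $\tau\gamma$. This replaces the $\gamma$ of the ideal case.

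\textbf{Step 3 (ratio and cases).} Dividing, and using the approximation $\mathbb{E}[\text{time}]/\mathbb{E}[\text{tokens}]$ for the long-run per-token time, gives
\[
T_{PSD}=\frac{\max(\gamma t, ct)}{\tau\gamma}.
\]
Splitting on the sign of $\gamma-c$ gives the two cases $t/\tau$ for $\gamma\ge c$ and $ct/(\tau\gamma)$ for $\gamma<c$.

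The main obstacle is Step 2 together with the rollback accounting. After a rejection, the speculatively drafted next window is discarded, so the clean ``$\tau\gamma$ tokens per $\max(\gamma,c)t$'' picture only holds on average. I would handle this by defining $\tau$ operationally as the effective fraction of drafted tokens committed per round in steady state, analogous to Theorem~\ref{thm:sd_practical}. I would then justify the expected-ratio form by a renewal-reward argument: rounds are i.i.d.\ cycles with bounded length, so the long-run per-token time equals the expected cycle time divided by the expected tokens per cycle.
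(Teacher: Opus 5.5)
Your proposal matches the paper's implicit argument. As in Theorem~\ref{thm:para_sd}, a Post-verify round lasts $\max(\gamma t, ct)$ because drafting and verification overlap, and replacing the $\gamma$ committed tokens by $\tau\gamma$ gives the formula; your renewal-reward framing adds rigor the paper does not supply. One caution: the paper scopes this result to the Post-verify steady state and treats rollback separately in Theorem~\ref{thm:para_sd_rollback}, where per-token acceptance with prefix truncation gives $\tau(1-\tau^{\gamma})/(1-\tau)$ rather than $\tau\gamma$, so folding rollback into an operationally defined $\tau$ redefines $\tau$ rather than following from your Step~2 linearity argument.
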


\noindent When $\gamma=c$, the practical speedup ratio is $\mathcal{V}_{PSD}=\frac{T_p}{T_{PSD}}=\tau\cdot c$, which also increases monotonically with the acceptance rate $\tau$. Recall that Theorem.~\ref{thm:paravlm_speedup} gives $\mathcal{V}_{\text{Vi}PSD}=\hat{\tau}(\mathcal{P},\alpha)\cdot c^*(\alpha)$. It represents an expanded speed ratio $c^{*}(\alpha)>c$ with a robust acceptance rate $\hat{\tau}(\mathcal{P},\alpha)$, where $\mathcal{P}$ is the visual alignment aware pruning and $\alpha$ is the pruning ratio. $\mathcal{V}_{\text{Vi}PSD}$ strikes a balance between $c^{*}(\alpha)$ and $\hat{\tau}(\mathcal{P},\alpha)$ that exceeds $\mathcal{V}_{PSD}$ by up to $0.3\sim0.4\times$ (see Fig.~\ref{fig:ablation_speedup_alpha} (a)).

Although Theorem.~\ref{thm:sd} -~\ref{thm:para_sd_practical} provide general illustrations, the theoretical speedup ratio of parallel decoding is actually more complex in practice. As described in Sec.~\ref{suppl:pearl}, the above theory holds only in Post-verify mode; however, if pre-verify (resulting from rollbacks) is taken into account, analysis must be conducted based on the distribution of accepted lengths. Following SpecBranch~\cite{Shen2025SpeculativeDV}, we assume the random variable of the accepted lengths $X$ follows a truncated geometric distribution $X\sim TruncGeo(\tau,\gamma)$.

\newtheorem*{templemma}{Lemma 1} %

\begin{templemma}[Expected Draft Accepted Length]
\label{lemma:trunc}
\renewcommand{\thelemma}{1} 
For truncated geometric distribution $X\sim TruncGeo(\tau,\gamma)$,
\begin{eqnarray}
\small
E[X]=\frac{\tau(1-{\tau}^{\gamma})}{1-\tau}.
\label{trunc}
\end{eqnarray}
\end{templemma}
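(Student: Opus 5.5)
The approach is to make the distributional assumption precise and then compute the expectation with the tail-sum formula, which avoids any differentiation of geometric series.

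\textbf{Step 1: fix the model.} Following SpecBranch~\cite{Shen2025SpeculativeDV}, I will model the $\gamma$ draft tokens of one window as being accepted or rejected sequentially. Each token is accepted with probability $\tau$, independently of the previous ones, and given that all previous tokens were accepted. The process stops at the first rejection. The accepted length $X$ is the number of leading accepted tokens, so $X\in\{0,1,\dots,\gamma\}$ with
\begin{eqnarray}
\small
P(X=k)=\tau^{k}(1-\tau)\ \ (0\le k<\gamma),\qquad P(X=\gamma)=\tau^{\gamma}.
\nonumber
\end{eqnarray}
This is what I take $X\sim TruncGeo(\tau,\gamma)$ to mean: a geometric law whose mass beyond $\gamma$ is collapsed onto $\gamma$, because the window cannot yield more than $\gamma$ accepted drafts. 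As a sanity check, these masses sum to $(1-\tau)\frac{1-\tau^{\gamma}}{1-\tau}+\tau^{\gamma}=1$.

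\textbf{Step 2: tail probabilities.} The event $\{X\ge k\}$ for $1\le k\le\gamma$ occurs exactly when the first $k$ drafts are all accepted. Hence $P(X\ge k)=\tau^{k}$.

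\textbf{Step 3: tail-sum formula.} Since $X$ is nonnegative and integer-valued, $E[X]=\sum_{k=1}^{\gamma}P(X\ge k)=\sum_{k=1}^{\gamma}\tau^{k}$. Summing this finite geometric series gives $\tau\frac{1-\tau^{\gamma}}{1-\tau}$, which is the claimed formula. Here we use $0<\tau<1$, as in Theorem~\ref{thm:sd_practical}. In the degenerate case $\tau\to1$ the limit is $\gamma$, which is consistent with full acceptance.

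\textbf{Cross-check and main obstacle.} I would also verify the result directly. Compute $\sum_{k=0}^{\gamma-1}k\tau^{k}(1-\tau)+\gamma\tau^{\gamma}$ using $\sum_{k=0}^{\gamma-1}k\tau^k=\frac{\tau-\gamma\tau^{\gamma}+(\gamma-1)\tau^{\gamma+1}}{(1-\tau)^2}$; the $\gamma\tau^\gamma$ terms cancel and the same expression results. The calculation itself is routine. The main obstacle is definitional: the paper does not spell out the support or normalization of $TruncGeo$. Under another convention, such as support $\{1,\dots,\gamma\}$ with renormalized masses, the formula would differ. So the essential step is Step 1, justifying the capped-geometric convention from the sequential rejection-sampling mechanism of Sec.~\ref{subsec:pre_sd}, where a rejection ends the window.
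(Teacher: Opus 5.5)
Your proof is correct, but your main argument takes a different route from the paper's. The paper computes directly from the definition, $E[X]=\sum_{k=0}^{\gamma-1}k(1-\tau)\tau^{k}+\gamma\tau^{\gamma}$. It evaluates $\sum_{k}k\tau^{k}$ by differentiating the finite geometric sum $S=\sum_{k=0}^{\gamma-1}\tau^{k}=\frac{1-\tau^{\gamma}}{1-\tau}$ with respect to $\tau$; the paper writes $\tau^{k}$ instead of $\tau^{\gamma}$ in that closed form, which is a typo. The $\gamma\tau^{\gamma}$ terms then cancel, so this is exactly your cross-check. Your tail-sum argument instead reads $P(X\ge k)=\tau^{k}$ off the sequential acceptance mechanism, so it needs no calculus and no cancellation. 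The paper's route, in turn, needs nothing beyond the pmf and the definition of expectation. A concrete benefit of your route is the intermediate form $E[X]=\sum_{k=1}^{\gamma}\tau^{k}$. It makes immediate the paper's later identity $(1+\tau^{\gamma})E[X]=\sum_{k=1}^{2\gamma}\tau^{k}$, which underlies the expression $\mathcal{V}_{PSD_r}=\frac{1}{2}\sum_{k=1}^{2c}\tau^{k}$ at $\gamma=c$. Finally, your Step~1 convention is the one the paper uses implicitly: its first display is precisely the capped-geometric pmf $P(X=k)=(1-\tau)\tau^{k}$ for $k<\gamma$ and $P(X=\gamma)=\tau^{\gamma}$. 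Stating it explicitly, together with $\tau\neq1$, fills a definitional gap the paper leaves open.
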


\begin{proof} Expand the expected value of random variable $X$,
\begin{eqnarray}
\small
E[X]&=&\sum_{k=0}^{\gamma}k\cdot \mathbb{P}(X=k)\nonumber\\&=&\sum_{k=0}^{\gamma-1}k\cdot(1-\tau){\tau}^{k}+\gamma\cdot{\tau}^{\gamma}.\nonumber
\end{eqnarray}
Let $S=\sum_{k=0}^{\gamma-1}{\tau}^k=\frac{1-{\tau}^k}{1-\tau}$. Take differentiation regarding $\tau$, we have,
\begin{eqnarray}
\small
\frac{\rm{d}S}{\rm{d}\tau}=\sum_{k=0}^{\gamma-1}k{\tau}^{k-1}=\frac{1-\gamma{\tau}^{\gamma-1}+(\gamma-1){\tau}^{\gamma}}{(1-{\tau})^2}\nonumber
\end{eqnarray}
\begin{align*}
\small
E[X]=(1-\tau)\tau\cdot\frac{\rm{d}S}{\rm{d}{\tau}}+\gamma\cdot{\tau}^{\gamma}=\frac{\tau(1-{\tau}^{\gamma})}{1-\tau}\nonumber\tag*{\qedhere}
\end{align*}
\end{proof}

\begin{theorem}[Parallel SD (with Rollback)]
\label{thm:para_sd_rollback} The per-token time of parallel SD under rollback is,
\begin{eqnarray}
\small
T_{PSD_r}=\frac{2\cdot \mathrm{max}(\gamma t,ct)}{(1+{\tau}^{\gamma})\cdot\frac{\tau(1-{\tau}^{\gamma})}{1-\tau}}.
\end{eqnarray}
\end{theorem}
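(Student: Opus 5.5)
We obtain the per-token time as the expected cost of one rollback cycle divided by the expected number of tokens that cycle produces. Following the pipeline in Sec.~\ref{suppl:pearl}, we view decoding as a renewal process whose cycles end at a rejection. A cycle consists of one Pre-verify round followed by Post-verify rounds. Every round, Pre-verify or Post-verify, lasts $\max(\gamma t, ct)$, because draft generation of a window overlaps with target verification and the slower of the two determines the round length. This is the same per-round cost used in Theorem~\ref{thm:para_sd_practical}. We then model a typical cycle as two rounds: a Pre-verify round that checks the first draft token, and one Post-verify round that checks a full window of $\gamma$ tokens. The numerator is therefore $2\max(\gamma t, ct)$.

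\textbf{Counting accepted tokens in the Post-verify round.} By the TruncGeo assumption, the number of accepted tokens in a window is $X\sim TruncGeo(\tau,\gamma)$. Lemma~1 gives $E[X]=\tau(1-\tau^{\gamma})/(1-\tau)$.

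\textbf{Accounting for the gain from continuation.} With probability $\tau^{\gamma}$ the whole window is accepted. The pipeline then stays in Post-verify mode, and the next window, already drafted during verification, contributes another expected $E[X]$ tokens inside the same two-round budget. With probability $1-\tau^{\gamma}$ a rejection occurs and the pipeline rolls back to Pre-verify. Putting these together, the expected number of tokens per two rounds is $(1+\tau^{\gamma})E[X]$, and dividing the cost by this count gives the claimed $T_{PSD_r}$.

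The main obstacle is justifying the factor $(1+\tau^{\gamma})$ and the two-round normalization together. A fully rigorous treatment would need a Markov chain on the two modes and a renewal-reward computation of stationary throughput. That computation would not obviously reduce to this closed form. I would first try to argue it as a first-order approximation: truncate the continuation to a single extra window and treat the Pre-verify round's own bonus or resampled token as absorbed by the truncation convention of $X$. I would state explicitly that the formula is a heuristic expected-rate estimate under these approximations, consistent with the treatment in SpecBranch~\cite{Shen2025SpeculativeDV}.
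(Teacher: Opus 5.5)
Your argument is essentially the paper's, which also charges two rounds of $\max(\gamma t, ct)$, takes $E[X]$ from Lemma~1, and counts tokens by conditioning on the first window: $\tau^{\gamma}(\gamma+E[X])+(1-\tau^{\gamma})\cdot\frac{E[X]-\gamma\tau^{\gamma}}{1-\tau^{\gamma}}=(1+\tau^{\gamma})E[X]$, which is your $E[X]+\tau^{\gamma}E[X]$ rewritten. The one thing to fix is the order of the rounds: in the paper the first round is the window, and the second is either the continuation window (after full acceptance) or the rollback/Pre-verify round that contributes nothing, so both branches really cost two rounds, whereas your Pre-verify-first ordering leaves the continuation window's verification uncharged; your caveat that an exact renewal-reward computation would not reduce to this closed form is accurate, since the formula is a two-round truncation heuristic.
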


\begin{proof} Define the acceptance vector $\omega=(\omega_1,...,\omega_{\gamma})\in\left\{0,1\right\}^{\gamma}$, where $\omega_i=1$ if and only if token $i$ is accepted. The accepted token count is,
\begin{eqnarray}
\small
X=\sum_{i=1}^{\gamma}\omega_i,\quad\mathbb{P}(\omega_i=1)=\tau\ (\rm{i.i.d.}).\nonumber
\end{eqnarray}
To compute the total number of tokens with retry, define two round of: 1) $\gamma$ tokens (accepted if $\omega=1$); 2) Retry if Round 1 fails, which yields $E[X]$ tokens. Thus, the total expectation is:
\begin{eqnarray}
\small
E_{total}&=&{\tau}^{\gamma}(\gamma+E[X])+(1-{\tau}^{\gamma})\frac{(E[X]-\gamma{\tau}^{\gamma})}{1-{\tau}^{\gamma}}\nonumber\\&=&(1+{\tau}^{\gamma})\cdot E[X]\nonumber
\end{eqnarray}
This implies that Parallel SD (with Rollback) achieves an acceleration of $(1+{\tau}^{\gamma})\times$ compared to vanilla SD (with Rollback). As $\tau\rightarrow1$, the acceleration ratio reaches $2\times$, matching the ideal case in Theorem.~\ref{thm:para_sd}. Thus, we calculate the per-token time with the two rounds:   
\begin{align*}
\small
T_{PSD_r}=\frac{T_{total}}{E_{total}}=\frac{2\cdot\mathrm{max}(\gamma t,ct)}{(1+{\tau}^{\gamma})\cdot\frac{\tau(1-{\tau}^{\gamma})}{1-\tau}}.\nonumber\tag*{\qedhere}
\end{align*}
\end{proof}
\noindent When $\gamma=c$, the speedup ratio of Parallel SD with rollback is $\mathcal{V}_{PSD_r}=\frac{T_p}{T_{PSD_r}}=\frac{1}{2}\sum_{k=1}^{2c}{\tau}^k$. It increases monotonically with the acceptance rate $\tau$ in polynomial order, so maintaining robust acceptance rates is even more crucial for parallel decoding. ParallelVLM increases $c$ by pruning and keeps an unbiased alignment for higher $\tau$, achieving considerable speedup. As shown in Tab.~\ref{tab:pruning_performance}, under the extreme pruning ratio $\alpha\rightarrow1$ where $c^*(\alpha)$ keeps almost the same, we observe the rapid decrease of speedup ratio with lower acceptance rates, confirming the high-sensitivity effect of parallel decoding performance on acceptance rates and the necessity of the visual alignment aware pruning.

\begin{table}[htbp]
  \centering
  \small
  \caption{Experiments with LLaVA-OV-7B/72B on VideoDetailCaption under extremely higher pruning ratio ($\alpha\rightarrow1$).  }
  \begin{tabular}{lcccc}
    \toprule
    \textbf{Pruning ratio $\alpha$} & \textbf{0.90} & \textbf{0.95} & \textbf{0.99} & \textbf{1.00} \\
    \midrule
     Accepted Lengths $M$  &  6.82  &   6.53 &  5.74    &  4.53  \\
     Acceptance rates $\tau$ &  0.41  &  0.37 &   0.33  &  0.27 \\
     \midrule
     Speedup   & 3.43$\times$  & 3.28$\times$  & 2.90$\times$& 2.53$\times$  \\
    \bottomrule
  \end{tabular}
  \label{tab:pruning_performance}
  \vspace{-0.15in}
\end{table}

\section{Why is UV-Prune Essential?}
\label{suppl:uv-prune}
In recent years, numerous studies have focused on pruning VLMs~\cite{FastVChen2024AnII, Xing2024PyramidDropAY, Tao2024DyCokeD, Zhang2024SparseVLMVT} to achieve inference acceleration at the cost of a certain degree of generation quality. In contrast, our work explores the transfer of video-based understanding from more powerful target models to draft models, making it orthogonal to prior research. We retain the full context of the target model to ensure theoretically lossless generation, and perform pruning on the draft model to align with the target model -- marking the first application in parallel decoding framework to allow window size expansion.

Feather~\cite{Endo2024FeatherTT} has discussed the issue of positional bias in the image domain. They observe that the shallow layers of LLMs serve as ``active" regions for visual token comprehension and attribute the attention bias at the vision-text boundaries to the more similar RoPE positional encodings. By computing attention scores without positional encodings as the criterion, they achieve better performance gains.

However, directly applying Feather’s method is no longer feasible for the target model to guide the draft model pruning. The target and draft models differ greatly in embedding dimensions and network depth, leading to architectural mismatch when performing intra-LLM pruning directly on the draft model. In addition, Feather~\cite{Endo2024FeatherTT} requires re-computation of the target model’s attention scores without RoPE, which is not only time-consuming but also conflicts with FlashAttention~\cite{Dao2023FlashAttention2FA}.

To avoid introducing additional computations and ensure compatibility with FlashAttention, we adopt token representations from the shallow layers of the target model as the pruning anchors. Since attention itself is a similarity weight, using cross-modal (vision-text) similarity is a well-motivated approach. To further eliminate the bias induced by positional encodings, we calculate the similarity variations of visual tokens at each position layer by layer to ``offset" the interference of positional encodings. In this way, UV-Prune accurately identifies the few key video tokens prioritized by the target model among a large number of tokens, and effectively transfers this focus to the draft model, achieving salient yet robust image-text relevance. 

As shown in Fig.~\ref{fig:layer}, both variation and similarity enhance the effectiveness of video token selection. The variation mechanism of UV-Prune mitigates ``positional bias", leading to significant improvements for both attention and similarity criterion. We also observe that similarity-variation converges at $L=20$ (and it generalizes well for different model combinations). Thus, we predefine $L=20$ as the number of layers for the target model to guide the draft model’s pruning. The propagation of token representations in such shallow layers consumes negligible transmission time ($\sim0.01$ s) and ensures that the subsequent pre-filling of the draft model is fully hidden within the target model’s pre-filling span. Most importantly, our alignment-aware pruning method meets the requirement for robust acceptance rates specified in Theorem~\ref{thm:para_sd_rollback}.

\begin{figure}[t]
    \centering
\includegraphics[width=1.0\linewidth]{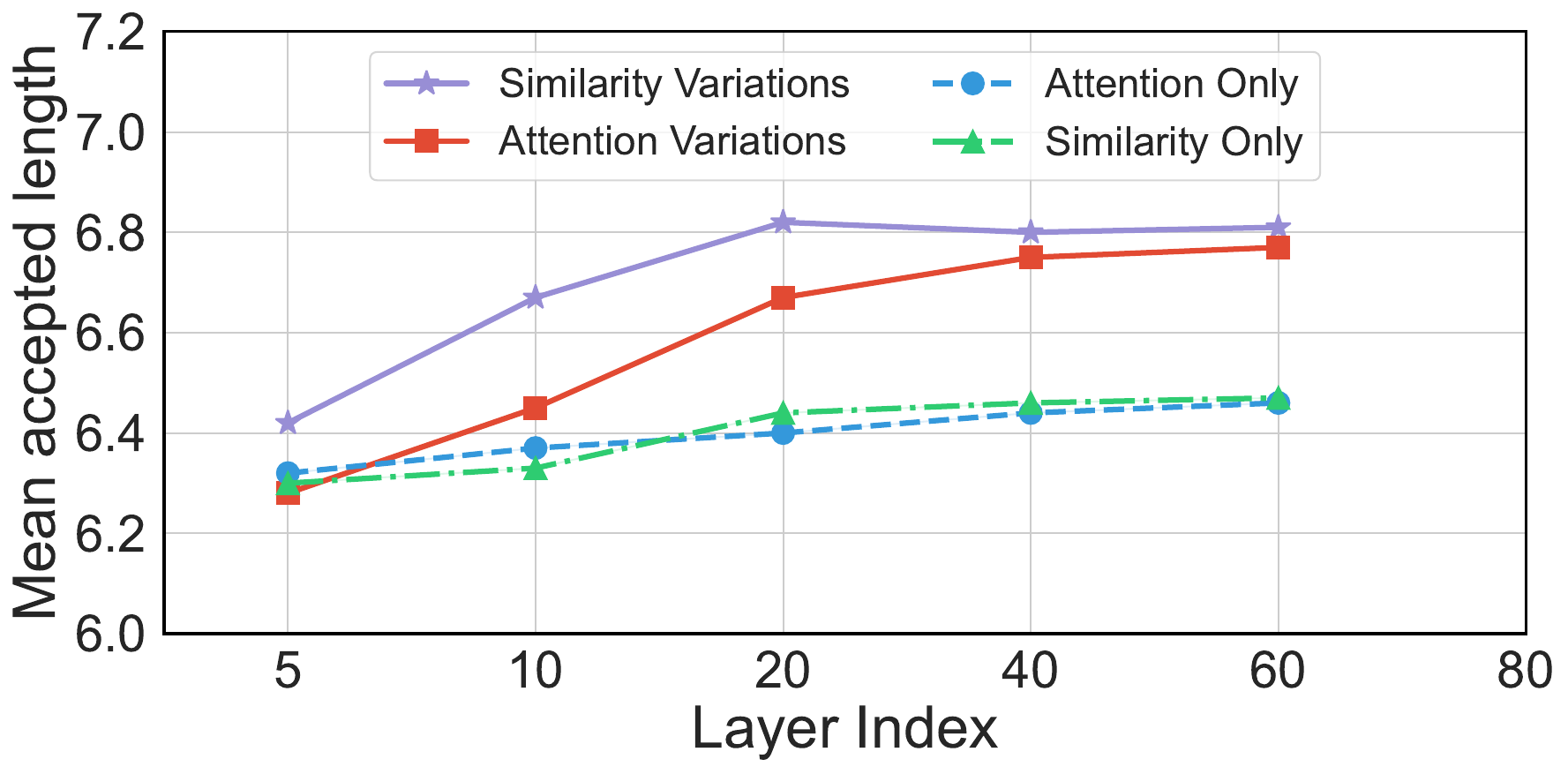}
\vspace{-0.30in}
    \caption{\small{Experiments with LLaVA-OV-7B/72B on VideoDetailCaption with different draft model pruning strategies.}}
    \label{fig:layer}
\vspace{-0.25in}
\end{figure}

\section{Case Study}
\label{suppl:case}
We present two generation paragraphs in Fig.~\ref{fig:case_1} (LLaVA-OneVision-7B/72B) and Fig.~\ref{fig:case_2} (Qwen2.5-VL-7B/32B) to intuitively demonstrate the effects of unbiased pruning guided by the target model in long-sequence video tokens, as well as the lossless generation performance of ParallelVLM compared to lossy visual token pruning schemes. We observe that ParallelVLM maintains finer-grained details and accurately reflects the realities in the videos.

\begin{figure*}
    \centering
\includegraphics[width=1.0\linewidth]{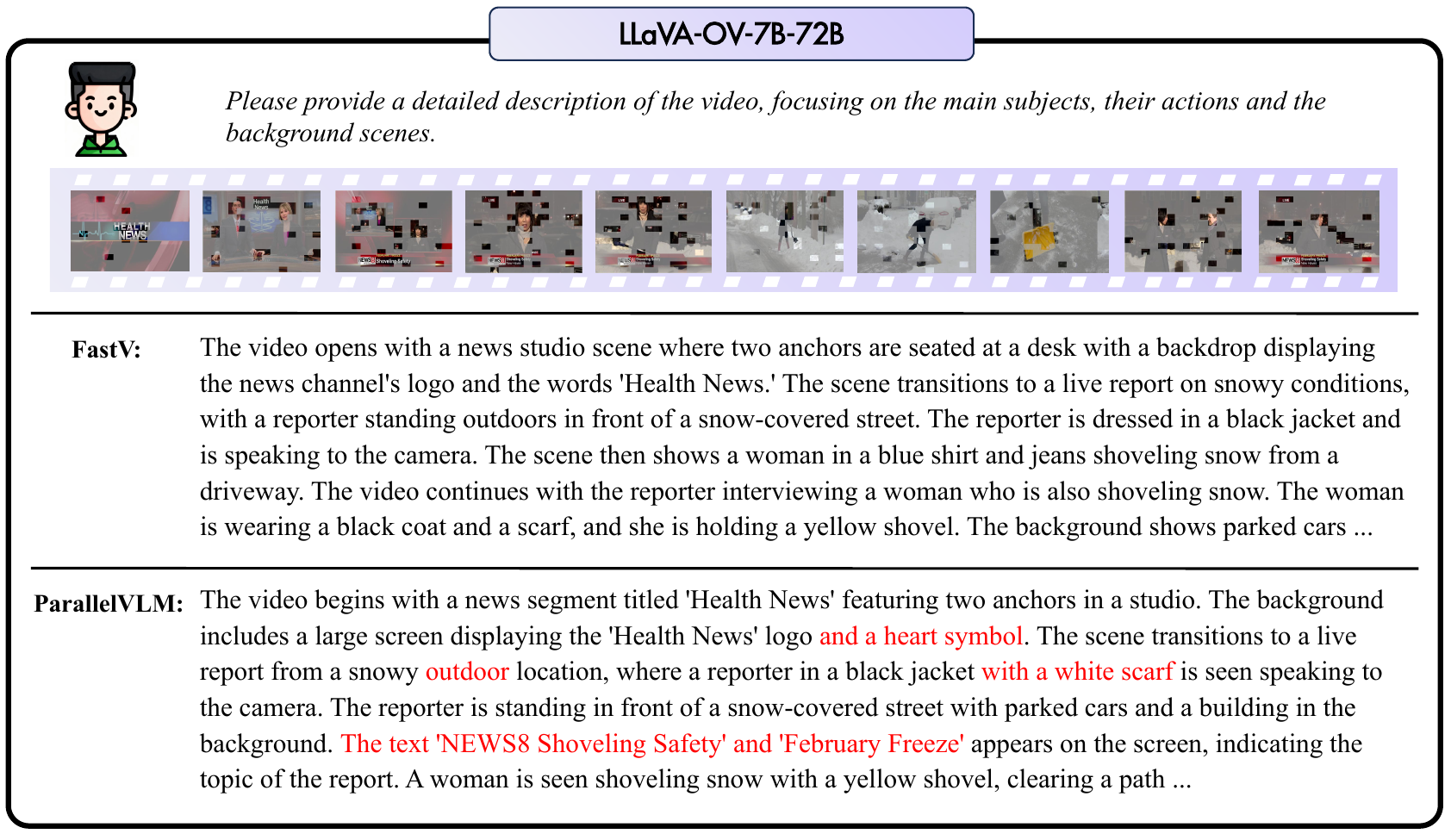}
\vspace{-0.2in}
    \caption{\small{Case study for LLaVA-OV-7B/72B.}}
    \label{fig:case_1}
\vspace{-0.15in}
\end{figure*}

\begin{figure*}
    \centering
\includegraphics[width=1.0\linewidth]{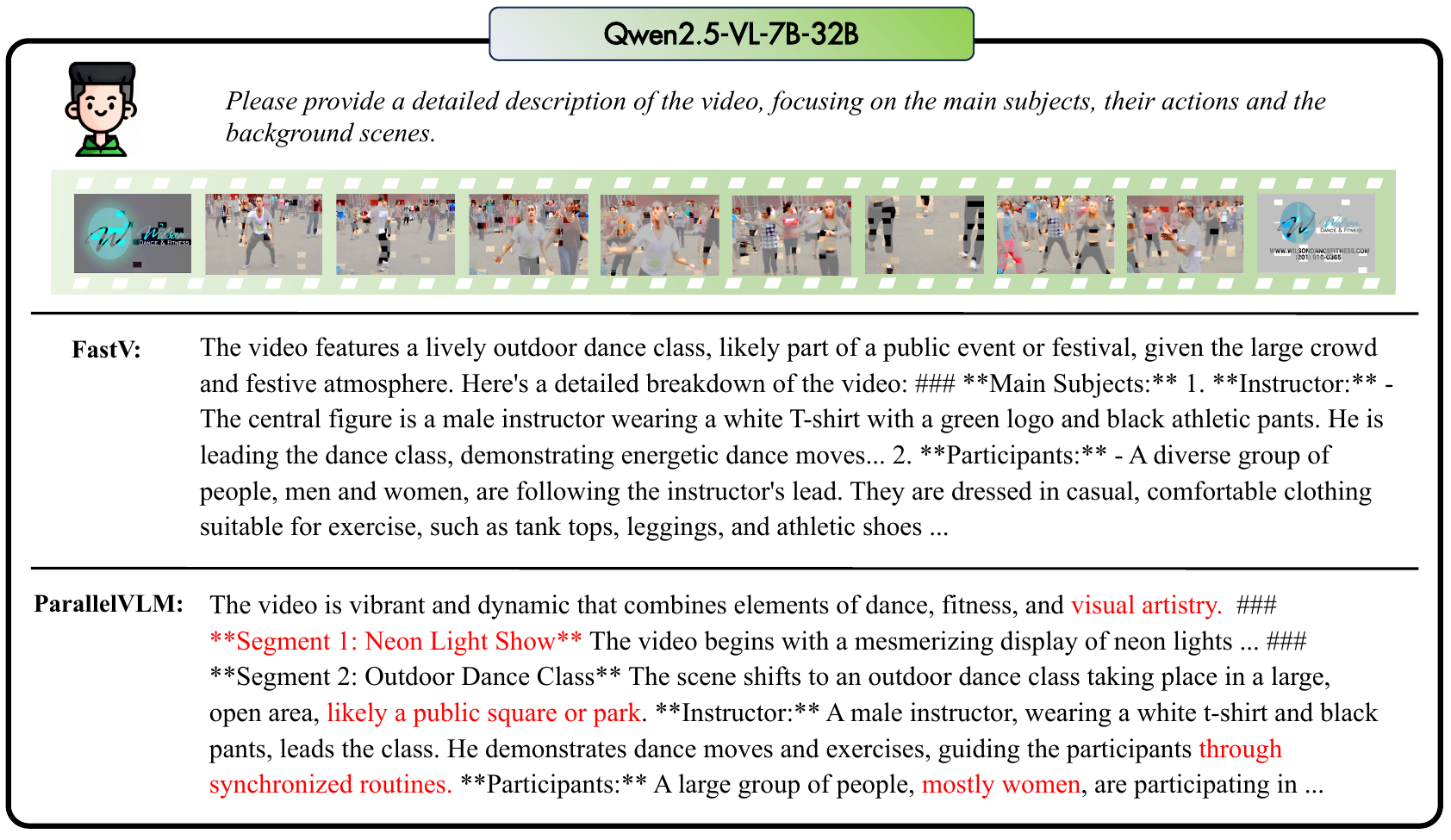}
\vspace{-0.2in}
    \caption{\small{Case study for Qwen2.5-VL-7B/32B.}}
    \label{fig:case_2}
\vspace{-0.15in}
\end{figure*}

%% file: main.bib
@article{Achiam2023GPT4TR,
  title={Gpt-4 technical report},
  author={Achiam, Josh and Adler, Steven and Agarwal, Sandhini and Ahmad, Lama and Akkaya, Ilge and Aleman, Florencia Leoni and Almeida, Diogo and Altenschmidt, Janko and Altman, Sam and Anadkat, Shyamal and others},
  journal={arXiv preprint arXiv:2303.08774},
  year={2023}
}

@article{DeepSeekAI2024DeepSeekV3TR,
  title={Deepseek-v3 technical report},
  author={Liu, Aixin and Feng, Bei and Xue, Bing and Wang, Bingxuan and Wu, Bochao and Lu, Chengda and Zhao, Chenggang and Deng, Chengqi and Zhang, Chenyu and Ruan, Chong and others},
  journal={arXiv preprint arXiv:2412.19437},
  year={2024}
}

@article{yang2025qwen3,
  title={Qwen3 technical report},
  author={Yang, An and Li, Anfeng and Yang, Baosong and Zhang, Beichen and Hui, Binyuan and Zheng, Bo and Yu, Bowen and Gao, Chang and Huang, Chengen and Lv, Chenxu and others},
  journal={arXiv preprint arXiv:2505.09388},
  year={2025}
}

@article{Touvron2023LLaMAOA,
  title={Llama: Open and efficient foundation language models},
  author={Touvron, Hugo and Lavril, Thibaut and Izacard, Gautier and Martinet, Xavier and Lachaux, Marie-Anne and Lacroix, Timoth{\'e}e and Rozi{\`e}re, Baptiste and Goyal, Naman and Hambro, Eric and Azhar, Faisal and others},
  journal={arXiv preprint arXiv:2302.13971},
  year={2023}
}

@article{Liu2023VisualIT,
  title={Visual instruction tuning},
  author={Liu, Haotian and Li, Chunyuan and Wu, Qingyang and Lee, Yong Jae},
  journal={Advances in neural information processing systems},
  volume={36},
  pages={34892--34916},
  year={2023}
}

@article{Li2024MiniGeminiMT,
  title={Mini-gemini: Mining the potential of multi-modality vision language models},
  author={Li, Yanwei and Zhang, Yuechen and Wang, Chengyao and Zhong, Zhisheng and Chen, Yixin and Chu, Ruihang and Liu, Shaoteng and Jia, Jiaya},
  journal={IEEE Transactions on Pattern Analysis and Machine Intelligence},
  year={2025},
  publisher={IEEE}
}

@article{hurst2024gpt,
  title={Gpt-4o system card},
  author={Hurst, Aaron and Lerer, Adam and Goucher, Adam P and Perelman, Adam and Ramesh, Aditya and Clark, Aidan and Ostrow, AJ and Welihinda, Akila and Hayes, Alan and Radford, Alec and others},
  journal={arXiv preprint arXiv:2410.21276},
  year={2024}
}

@inproceedings{Lin2023VideoLLaVALU,
  title={Video-llava: Learning united visual representation by alignment before projection},
  author={Lin, Bin and Ye, Yang and Zhu, Bin and Cui, Jiaxi and Ning, Munan and Jin, Peng and Yuan, Li},
  booktitle={Proceedings of the 2024 conference on empirical methods in natural language processing},
  pages={5971--5984},
  year={2024}
}

@article{Li2024LLaVAOneVisionEV,
  title={Llava-onevision: Easy visual task transfer},
  author={Li, Bo and Zhang, Yuanhan and Guo, Dong and Zhang, Renrui and Li, Feng and Zhang, Hao and Zhang, Kaichen and Zhang, Peiyuan and Li, Yanwei and Liu, Ziwei and others},
  journal={arXiv preprint arXiv:2408.03326},
  year={2024}
}

@article{Bai2025Qwen25VLTR,
  title={Qwen2.5-VL Technical Report},
  author={Shuai Bai and Keqin Chen and Xuejing Liu and Jialin Wang and Wenbin Ge and Sibo Song and Kai Dang and Peng Wang and Shijie Wang and Jun Tang and others},
  journal={ArXiv},
  year={2025},
  volume={abs/2502.13923},
  url={https://api.semanticscholar.org/CorpusID:276449796}
}

@article{Fang2024MMBenchVideoAL,
  title={Mmbench-video: A long-form multi-shot benchmark for holistic video understanding},
  author={Fang, Xinyu and Mao, Kangrui and Duan, Haodong and Zhao, Xiangyu and Li, Yining and Lin, Dahua and Chen, Kai},
  journal={Advances in Neural Information Processing Systems},
  volume={37},
  pages={89098--89124},
  year={2024}
}

@inproceedings{Liu2025ProtectingYV,
  title={Protecting your video content: Disrupting automated video-based llm annotations},
  author={Liu, Haitong and Gao, Kuofeng and Bai, Yang and Li, Jinmin and Shan, Jinxiao and Dai, Tao and Xia, Shu-Tao},
  booktitle={Proceedings of the IEEE/CVF Conference on Computer Vision and Pattern Recognition},
  pages={24056--24065},
  year={2025}
}

@inproceedings{shao2024lmdrive,
  title={Lmdrive: Closed-loop end-to-end driving with large language models},
  author={Shao, Hao and Hu, Yuxuan and Wang, Letian and Song, Guanglu and Waslander, Steven L and Liu, Yu and Li, Hongsheng},
  booktitle={Proceedings of the IEEE/CVF conference on computer vision and pattern recognition},
  pages={15120--15130},
  year={2024}
}

@inproceedings{Tao2024DyCokeD,
  title={Dycoke: Dynamic compression of tokens for fast video large language models},
  author={Tao, Keda and Qin, Can and You, Haoxuan and Sui, Yang and Wang, Huan},
  booktitle={Proceedings of the Computer Vision and Pattern Recognition Conference},
  pages={18992--19001},
  year={2025}
}

@InProceedings{Xing2024PyramidDropAY,
    author    = {Xing, Long and Huang, Qidong and Dong, Xiaoyi and Lu, Jiajie and Zhang, Pan and Zang, Yuhang and Cao, Yuhang and He, Conghui and Wang, Jiaqi and Wu, Feng and Lin, Dahua},
    title     = {Conical Visual Concentration for Efficient Large Vision-Language Models},
    booktitle = {Proceedings of the Computer Vision and Pattern Recognition Conference},
    month     = {June},
    year      = {2025},
    pages     = {14593-14603}
}

@inproceedings{Bolya2022TokenMY,
  title={Token Merging: Your {ViT} but Faster},
  author={Bolya, Daniel and Fu, Cheng-Yang and Dai, Xiaoliang and Zhang, Peizhao and Feichtenhofer, Christoph and Hoffman, Judy},
  booktitle={International Conference on Learning Representations},
  year={2023}
}

@inproceedings{Zhang2024SparseVLMVT,
  title={SparseVLM: Visual Token Sparsification for Efficient Vision-Language Model Inference},
  author={Zhang, Yuan and Fan, Chun-Kai and Ma, Junpeng and Zheng, Wenzhao and Huang, Tao and Cheng, Kuan and Gudovskiy, Denis and Okuno, Tomoyuki and Nakata, Yohei and Keutzer, Kurt and others},
  booktitle={International Conference on Machine Learning},
  year={2025}
}

@inproceedings{Shen2025FastVIDDD,
    title={Fast{VID}: Dynamic Density Pruning for Fast Video Large Language Models},
    author={Leqi Shen and Guoqiang Gong and Tao He and Yifeng Zhang and pengzhang liu and Sicheng Zhao and Guiguang Ding},
    booktitle={The Thirty-ninth Annual Conference on Neural Information Processing Systems},
    year={2025},
    url={https://openreview.net/forum?id=2xS4VtpApy}
}

@inproceedings{Huang2024PruneVidVT,
  title={Prunevid: Visual token pruning for efficient video large language models},
  author={Huang, Xiaohu and Zhou, Hao and Han, Kai},
  booktitle={Findings of the Association for Computational Linguistics: ACL 2025},
  pages={19959--19973},
  year={2025}
}

@inproceedings{Dao2023FlashAttention2FA,
  title={Flash{A}ttention-2: Faster Attention with Better Parallelism and Work Partitioning},
  author={Dao, Tri},
  booktitle={International Conference on Learning Representations},
  year={2024}
}

@inproceedings{Xiao2023EfficientSL,
  title={Efficient streaming language models with attention sinks},
  author={Xiao, Guangxuan and Tian, Yuandong and Chen, Beidi and Han, Song and Lewis, Mike},
  booktitle={The Twelfth International Conference on Learning Representations},
  year={2024}
}

@article{Vaswani2017AttentionIA,
  title={Attention is all you need},
  author={Vaswani, Ashish and Shazeer, Noam and Parmar, Niki and Uszkoreit, Jakob and Jones, Llion and Gomez, Aidan N and Kaiser, {\L}ukasz and Polosukhin, Illia},
  journal={Advances in neural information processing systems},
  volume={30},
  year={2017}
}

@inproceedings{Zhang2025SparsetoDenseAF,
  title={Sparse-to-Dense: A Free Lunch for Lossless Acceleration of Video Understanding in LLMs},
  author={Zhang, Xuan and Du, Cunxiao and Yu, Sicheng and Wu, Jiawei and Zhang, Fengzhuo and Gao, Wei and Liu, Qian},
  booktitle={Proceedings of the 63rd Annual Meeting of the Association for Computational Linguistics (Volume 2: Short Papers)},
  pages={734--742},
  year={2025}
}

@inproceedings{Ji2025SpecVLMES,
  title={Specvlm: Enhancing speculative decoding of video llms via verifier-guided token pruning},
  author={Ji, Yicheng and Zhang, Jun and Xia, Heming and Chen, Jinpeng and Shou, Lidan and Chen, Gang and Li, Huan},
  booktitle={Proceedings of the 2025 Conference on Empirical Methods in Natural Language Processing},
  pages={7216--7230},
  year={2025}
}

@inproceedings{FastVChen2024AnII,
  title={An image is worth 1/2 tokens after layer 2: Plug-and-play inference acceleration for large vision-language models},
  author={Chen, Liang and Zhao, Haozhe and Liu, Tianyu and Bai, Shuai and Lin, Junyang and Zhou, Chang and Chang, Baobao},
  booktitle={European Conference on Computer Vision},
  pages={19--35},
  year={2024},
  organization={Springer}
}

@inproceedings{Alvar2025DivPruneDV,
  title={Divprune: Diversity-based visual token pruning for large multimodal models},
  author={Alvar, Saeed Ranjbar and Singh, Gursimran and Akbari, Mohammad and Zhang, Yong},
  booktitle={Proceedings of the Computer Vision and Pattern Recognition Conference},
  pages={9392--9401},
  year={2025}
}

@inproceedings{Shang2024LLaVAPruMergeAT,
  title={Llava-prumerge: Adaptive token reduction for efficient large multimodal models},
  author={Shang, Yuzhang and Cai, Mu and Xu, Bingxin and Lee, Yong Jae and Yan, Yan},
  booktitle={Proceedings of the IEEE/CVF International Conference on Computer Vision},
  pages={22857--22867},
  year={2025}
}

@inproceedings{Leviathan2022FastIF,
  title={Fast inference from transformers via speculative decoding},
  author={Leviathan, Yaniv and Kalman, Matan and Matias, Yossi},
  booktitle={International Conference on Machine Learning},
  pages={19274--19286},
  year={2023},
  organization={PMLR}
}

@InProceedings{Chen2023AcceleratingLL,
  title = 	 {Fast Inference from Transformers via Speculative Decoding},
  author =       {Leviathan, Yaniv and Kalman, Matan and Matias, Yossi},
  booktitle = 	 {Proceedings of the 40th International Conference on Machine Learning},
  pages = 	 {19274--19286},
  year = 	 {2023},
  editor = 	 {Krause, Andreas and Brunskill, Emma and Cho, Kyunghyun and Engelhardt, Barbara and Sabato, Sivan and Scarlett, Jonathan},
  volume = 	 {202},
  series = 	 {Proceedings of Machine Learning Research},
  month = 	 {23--29 Jul},
  publisher =    {PMLR},
  url = 	 {https://proceedings.mlr.press/v202/leviathan23a.html},
}

@inproceedings{Xia2022SpeculativeDE,
  title={Speculative decoding: Exploiting speculative execution for accelerating seq2seq generation},
  author={Xia, Heming and Ge, Tao and Wang, Peiyi and Chen, Si-Qing and Wei, Furu and Sui, Zhifang},
  booktitle={Findings of the Association for Computational Linguistics: EMNLP 2023},
  pages={3909--3925},
  year={2023}
}

@inproceedings{Li2024EAGLESS, 
	author = {Yuhui Li and Fangyun Wei and Chao Zhang and Hongyang Zhang}, 
	title = {{EAGLE}: Speculative Sampling Requires Rethinking Feature Uncertainty}, 
	booktitle = {International Conference on Machine Learning},
	year = {2024}
}

@inproceedings{Li2024EAGLE2FI, 
	author = {Yuhui Li and Fangyun Wei and Chao Zhang and Hongyang Zhang}, 
	title = {{EAGLE-2}: Faster Inference of Language Models with Dynamic Draft Trees}, 
	booktitle = {Empirical Methods in Natural Language Processing},
	year = {2024}
}

@inproceedings{Li2025EAGLE3SU,
    author = {Yuhui Li and Fangyun Wei and Chao Zhang and Hongyang Zhang},
    title = {{EAGLE-3}: Scaling up Inference Acceleration of Large Language Models via Training-Time Test}, 
    booktitle = {Annual Conference on Neural Information Processing Systems},
    year = {2025}
}

@inproceedings{Liu2024ParallelSD,
    title={{PEARL}: Parallel Speculative Decoding with Adaptive Draft Length},
    author={Tianyu Liu and Yun Li and Qitan Lv and Kai Liu and Jianchen Zhu and Winston Hu and Xiao Sun},
    booktitle={The Thirteenth International Conference on Learning Representations},
    year={2025},
    url={https://openreview.net/forum?id=QOXrVMiHGK}
}

@inproceedings{Shen2025SpeculativeDV,
  title={SpecBranch: Speculative Decoding via Hybrid Drafting and Rollback-Aware Branch Parallelism},
  author={Shen, Yuhao and Shen, Junyi and Kong, Quan and Liu, Tianyu and Lu, Yao and Wang, Cong},
  booktitle={The Fourteenth International Conference on Learning Representations},
  year={2026}
}

@inproceedings{Gagrani2024OnSD,
  title={On speculative decoding for multimodal large language models},
  author={Gagrani, Mukul and Goel, Raghavv and Jeon, Wonseok and Park, Junyoung and Lee, Mingu and Lott, Christopher},
  booktitle={Proceedings of the IEEE/CVF Conference on Computer Vision and Pattern Recognition},
  pages={8285--8289},
  year={2024}
}

@article{Huang2025SpecVLMFS,
  title={SpecVLM: Fast Speculative Decoding in Vision-Language Models},
  author={Huang, Haiduo and Yang, Fuwei and Liu, Zhenhua and Yin, Xuanwu and Li, Dong and Ren, Pengju and Barsoum, Emad},
  journal={arXiv preprint arXiv:2509.11815},
  year={2025}
}

@inproceedings{Shao2025GrowingAT,
  title={Growing a twig to accelerate large vision-language models},
  author={Shao, Zhenwei and Wang, Mingyang and Yu, Zhou and Pan, Wenwen and Yang, Yan and Wei, Tao and Zhang, Hongyuan and Mao, Ning and Chen, Wei and Yu, Jun},
  booktitle={Proceedings of the IEEE/CVF International Conference on Computer Vision},
  pages={20064--20074},
  year={2025}
}

@inproceedings{Endo2024FeatherTT,
  title={Feather the throttle: Revisiting visual token pruning for vision-language model acceleration},
  author={Endo, Mark and Wang, Xiaohan and Yeung-Levy, Serena},
  booktitle={Proceedings of the IEEE/CVF International Conference on Computer Vision},
  pages={22826--22835},
  year={2025}
}

@inproceedings{Li2023MVBenchAC,
  title={Mvbench: A comprehensive multi-modal video understanding benchmark},
  author={Li, Kunchang and Wang, Yali and He, Yinan and Li, Yizhuo and Wang, Yi and Liu, Yi and Wang, Zun and Xu, Jilan and Chen, Guo and Luo, Ping and others},
  booktitle={Proceedings of the IEEE/CVF Conference on Computer Vision and Pattern Recognition},
  pages={22195--22206},
  year={2024}
}

@misc{VideoDetailCaption,
  author = {LMMs-Lab},
  title = {Video Detail Caption},
  year = {2024},
 note = {Accessed: 2024-11}
}

@InProceedings{Zhou2024MLVUAC,
    author    = {Zhou, Junjie and Shu, Yan and Zhao, Bo and Wu, Boya and Liang, Zhengyang and Xiao, Shitao and Qin, Minghao and Yang, Xi and Xiong, Yongping and Zhang, Bo and Huang, Tiejun and Liu, Zheng},
    title     = {MLVU: Benchmarking Multi-task Long Video Understanding},
    booktitle = {Proceedings of the Computer Vision and Pattern Recognition Conference},
    month     = {June},
    year      = {2025},
    pages     = {13691-13701}
}

@article{Wu2024LongVideoBenchAB,
  title={Longvideobench: A benchmark for long-context interleaved video-language understanding},
  author={Wu, Haoning and Li, Dongxu and Chen, Bei and Li, Junnan},
  journal={Advances in Neural Information Processing Systems},
  volume={37},
  pages={28828--28857},
  year={2024}
}

@inproceedings{Fu2024VideoMMETF,
  title={Video-mme: The first-ever comprehensive evaluation benchmark of multi-modal llms in video analysis},
  author={Fu, Chaoyou and Dai, Yuhan and Luo, Yongdong and Li, Lei and Ren, Shuhuai and Zhang, Renrui and Wang, Zihan and Zhou, Chenyu and Shen, Yunhang and Zhang, Mengdan and others},
  booktitle={Proceedings of the IEEE/CVF conference on computer vision and pattern recognition},
  pages={24108--24118},
  year={2025}
}

@InProceedings{cai2024medusa,
  title = 	 {Medusa: Simple {LLM} Inference Acceleration Framework with Multiple Decoding Heads},
  author =       {Cai, Tianle and Li, Yuhong and Geng, Zhengyang and Peng, Hongwu and Lee, Jason D. and Chen, Deming and Dao, Tri},
  booktitle = 	 {Proceedings of the 41st International Conference on Machine Learning},
  pages = 	 {5209--5235},
  year = 	 {2024},
  editor = 	 {Salakhutdinov, Ruslan and Kolter, Zico and Heller, Katherine and Weller, Adrian and Oliver, Nuria and Scarlett, Jonathan and Berkenkamp, Felix},
  volume = 	 {235},
  series = 	 {Proceedings of Machine Learning Research},
  month = 	 {21--27 Jul},
  publisher =    {PMLR},
  url = 	 {https://proceedings.mlr.press/v235/cai24b.html},

}

@inproceedings{du2024glide,
author = {Du, Cunxiao and Jiang, Jing and Yuanchen, Xu and Wu, Jiawei and Yu, Sicheng and Li, Yongqi and Li, Shenggui and Xu, Kai and Nie, Liqiang and Tu, Zhaopeng and You, Yang},
title = {GLIDE with a CAPE: a low-hassle method to accelerate speculative decoding},
year = {2024},
publisher = {JMLR.org},
booktitle = {Proceedings of the 41st International Conference on Machine Learning},
articleno = {465},
numpages = {17},
location = {Vienna, Austria},
series = {ICML'24}
}

@InProceedings{fu2024break,
  title = 	 {Break the Sequential Dependency of {LLM} Inference Using Lookahead Decoding},
  author =       {Fu, Yichao and Bailis, Peter and Stoica, Ion and Zhang, Hao},
  booktitle = 	 {Proceedings of the 41st International Conference on Machine Learning},
  pages = 	 {14060--14079},
  year = 	 {2024},
  editor = 	 {Salakhutdinov, Ruslan and Kolter, Zico and Heller, Katherine and Weller, Adrian and Oliver, Nuria and Scarlett, Jonathan and Berkenkamp, Felix},
  volume = 	 {235},
  series = 	 {Proceedings of Machine Learning Research},
  month = 	 {21--27 Jul},
  publisher =    {PMLR},
  url = 	 {https://proceedings.mlr.press/v235/fu24a.html},
}

@inproceedings{Zhao2024OuroborosGL,
  title={Ouroboros: Generating longer drafts phrase by phrase for faster speculative decoding},
  author={Zhao, Weilin and Huang, Yuxiang and Han, Xu and Xu, Wang and Xiao, Chaojun and Zhang, Xinrong and Fang, Yewei and Zhang, Kaihuo and Liu, Zhiyuan and Sun, Maosong},
  booktitle={Proceedings of the 2024 Conference on Empirical Methods in Natural Language Processing},
  pages={13378--13393},
  year={2024}
}

@inproceedings{wen2025stop,
  title={Stop Looking for “Important Tokens” in Multimodal Language Models: Duplication Matters More},
  author={Wen, Zichen and Gao, Yifeng and Wang, Shaobo and Zhang, Junyuan and Zhang, Qintong and Li, Weijia and He, Conghui and Zhang, Linfeng},
  booktitle={Proceedings of the 2025 Conference on Empirical Methods in Natural Language Processing},
  pages={9972--9991},
  year={2025}
}

@inproceedings{liu2025video,
  title={Video compression commander: Plug-and-play inference acceleration for video large language models},
  author={Liu, Xuyang and Wang, Yiyu and Ma, Junpeng and Zhang, Linfeng},
  booktitle={Proceedings of the 2025 Conference on Empirical Methods in Natural Language Processing},
  pages={1910--1924},
  year={2025}
}

@inproceedings{Tang2025AdaptiveKS,
  title={Adaptive keyframe sampling for long video understanding},
  author={Tang, Xi and Qiu, Jihao and Xie, Lingxi and Tian, Yunjie and Jiao, Jianbin and Ye, Qixiang},
  booktitle={Proceedings of the Computer Vision and Pattern Recognition Conference},
  pages={29118--29128},
  year={2025}
}

@article{shen2026double,
  title={Double: Breaking the Acceleration Limit via Double Retrieval Speculative Parallelism},
  author={Shen, Yuhao and Liu, Tianyu and Shen, Junyi and Wu, Jinyang and Kong, Quan and Huan, Li and Wang, Cong},
  journal={arXiv preprint arXiv:2601.05524},
  year={2026}
}

@inproceedings{yang2025visionzip,
  title={Visionzip: Longer is better but not necessary in vision language models},
  author={Yang, Senqiao and Chen, Yukang and Tian, Zhuotao and Wang, Chengyao and Li, Jingyao and Yu, Bei and Jia, Jiaya},
  booktitle={Proceedings of the IEEE/CVF Conference on Computer Vision and Pattern Recognition},
  pages={19792--19802},
  year={2025}
}

@article{wu2026atlas,
  title={Atlas: Orchestrating Heterogeneous Models and Tools for Multi-Domain Complex Reasoning},
  author={Wu, Jinyang and Zhai, Guocheng and Jin, Ruihan and Yuan, Jiahao and Shen, Yuhao and Zhang, Shuai and Wen, Zhengqi and Tao, Jianhua},
  journal={arXiv preprint arXiv:2601.03872},
  year={2026}
}
